\DeclareTextSymbolDefault{\ohorn}{T5}
\DeclareTextSymbolDefault{\uhorn}{T5}
\definecolor{codec0}{HTML}{369275}
\definecolor{codec1}{HTML}{cc5d32}
\definecolor{codec2}{HTML}{5d709b}
\definecolor{codec3}{HTML}{c76aa3}
\definecolor{codec4}{HTML}{76a824}
\definecolor{codec5}{HTML}{bf990f}
\definecolor{codec6}{HTML}{e5c494}
\definecolor{codec7}{HTML}{b3b3b3}
\newcommand{\citeposs}[1]{\citeauthor{#1}'s (\citeyear{#1})}
\newcommand{\poisson}{\mathrm{Pois}}
\newcommand{\categorical}{\mathrm{Cat}}
\newcommand{\dirichlet}{\mathrm{Dir}}
\DeclareMathOperator*{\argmax}{arg\,max}
\newcommand{\R}{\mathbbm{R}}
\newcommand{\MI}{\mathrm{I}}
\newcommand{\ent}{\mathrm{H}}
\newcommand{\KL}{\mathrm{KL}}
\newcommand{\softmax}{\mathrm{softmax}}
\newcommand{\vs}{\mathbf{s}}
\newcommand{\vx}{x}
\newcommand{\vy}{y}
\newcommand{\vr}{r}
\newcommand{\vt}{t}
\newcommand{\vd}{\mathbf{d}}
\newcommand{\vphi}{{\boldsymbol \phi}}
\newcommand{\vpsi}{{\boldsymbol \psi}}
\newcommand{\vdn}{\vd_N}
\newcommand{\defn}[1]{\textbf{#1}}
\newcommand{\calT}{\mathcal{T}}
\newcommand{\calX}{\mathcal{X}}
\newcommand{\calV}{\mathcal{V}}
\newcommand{\calY}{\mathcal{Y}}
\newcommand{\calZ}{\mathcal{Z}}
\newcommand{\calD}{\mathbf{D}}
\newcommand{\calDn}{\mathbf{D}_N}
\newcommand{\vtheta}{{\boldsymbol \theta}}
\newcommand{\valpha}{{\boldsymbol \alpha}}
\newcommand{\ptheta}{p_{\vtheta}}
\newcommand{\enttheta}{\mathrm{H}_{\vtheta}}
\newcommand{\MItheta}{\mathrm{I}_{\vtheta}}
\newcommand{\MIcalV}{\mathrm{I}_{\calV}}
\newcommand{\truetheta}{\hat{\vtheta}}
\newcommand{\defeq}[0]{\mathrel{\stackrel{\textnormal{\tiny def}}{=}}}
\newtheorem{thm}{Theorem}
\newtheorem{defnt}{Definition}
\crefname{section}{\S}{\S\S}
\Crefname{section}{\S}{\S\S}
\crefname{table}{Tab.}{}
\crefname{figure}{Fig.}{Figs.}
\crefname{algorithm}{Algorithm}{}
\crefname{algorithm}{Algorithm}{}
\crefname{line}{Line}{}
\crefname{appendix}{App.}{}
\crefname{thm}{Theorem}{Theorems}
\crefname{cor}{Corollary}{}
\crefname{prop}{Proposition}{}
\crefname{def}{Definition}{}
\crefname{defnt}{Definition}{}
\title{A Bayesian Framework for Information-Theoretic Probing}
\newcommand{\football}{\emoji[openmoji]{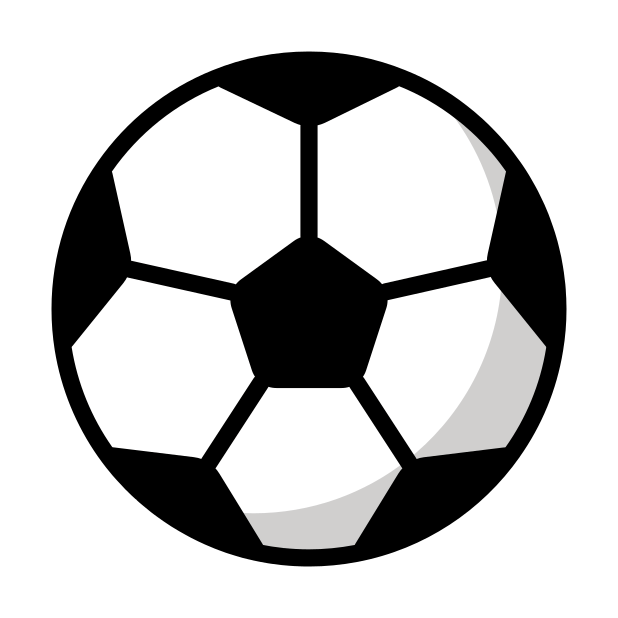}}
\newcommand{\americanfootball}{\emoji[openmoji]{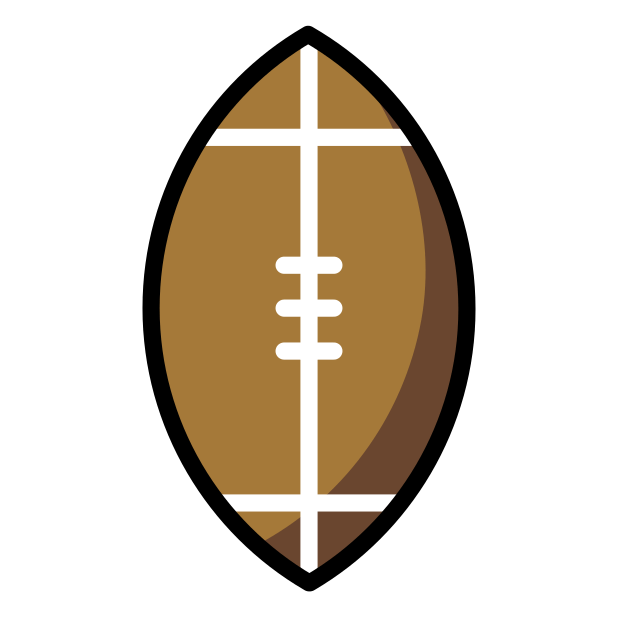}}
\newcommand{\emo}[1]{\raise1.0ex\hbox{\normalfont#1}}
\newcommand{\ucambridge}{\emo{\football}}
\newcommand{\ethz}{\emo{\americanfootball}}
\author{%
Tiago Pimentel\ucambridge{} \\
  \ucambridge{}University of Cambridge \phantom{\ucambridge{}} \\
  \texttt{\href{mailto:tp472@cam.ac.uk}{tp472@cam.ac.uk}}
  \And
  \phantom{\ucambridge{}}Ryan Cotterell\ucambridge{}\ethz{} \\
  \ethz{}ETH Z\"{u}rich \phantom{\ethz{}} \\
  \texttt{\href{mailto:ryan.cotterell@inf.ethz.ch}{ryan.cotterell@inf.ethz.ch}}
}
\date{}
\begin{document}
\maketitle

\begin{abstract}
\citet{pimentel2020information} recently analysed probing from an information-theoretic perspective. 
They argue that probing should be seen as approximating a mutual information.
This led to the rather unintuitive conclusion that representations encode exactly the same information about a target task as the original sentences.
The mutual information, however, assumes the true probability distribution of a pair of random variables is known, leading to unintuitive results in settings where it is not.
This paper proposes a new framework to measure what we term \defn{Bayesian mutual information}, which analyses information from the perspective of Bayesian agents---allowing for more intuitive findings in scenarios with finite data.
For instance, under Bayesian MI we have that data can add information, processing can help, and information can hurt, which makes it more intuitive for machine learning applications.
Finally, we apply our framework to probing where we believe Bayesian mutual information naturally operationalises \emph{ease of extraction} by explicitly limiting the available background knowledge to solve a task.
\end{abstract}

\section{Introduction}

\newcite{pimentel2020information} recently undertook an information-theoretic analysis of probing. 
They argue that probing may be viewed as approximating the mutual information between a linguistic property (e.g., part-of-speech tags) %
and a contextual representation (e.g., BERT). %
Counter-intuitively, however, due to the data-processing inequality, contextual representations contain exactly the same information about any task as the original sentence, under mild conditions. 
When viewed under this lens, the goal of probing is not inherently clear.
One limitation of \citeauthor{pimentel2020information}'s analysis is that it focuses on the \defn{mutual information} (MI)---%
to be of practical application, their argument requires that a probe matches the \emph{true distribution} according to which 
the data were generated in the limit of \emph{finite training data}. 
In contrast, our paper formulates an information-theoretic framework that is compatible both with model misspecification and the finite data assumption.\looseness=-1

\begin{figure}
    \centering
\tagmcbegin{tag=Figure,alttext={An example of Bayesian MI on dependency arc labelling in English. Two smoothed curves show both the unconditional and conditional Bayesian entropies, which converge to the true entropies in the limit of infinite data. The difference between both Bayesian entropies is the Bayesian MI.}}
    \includegraphics[width=\columnwidth]{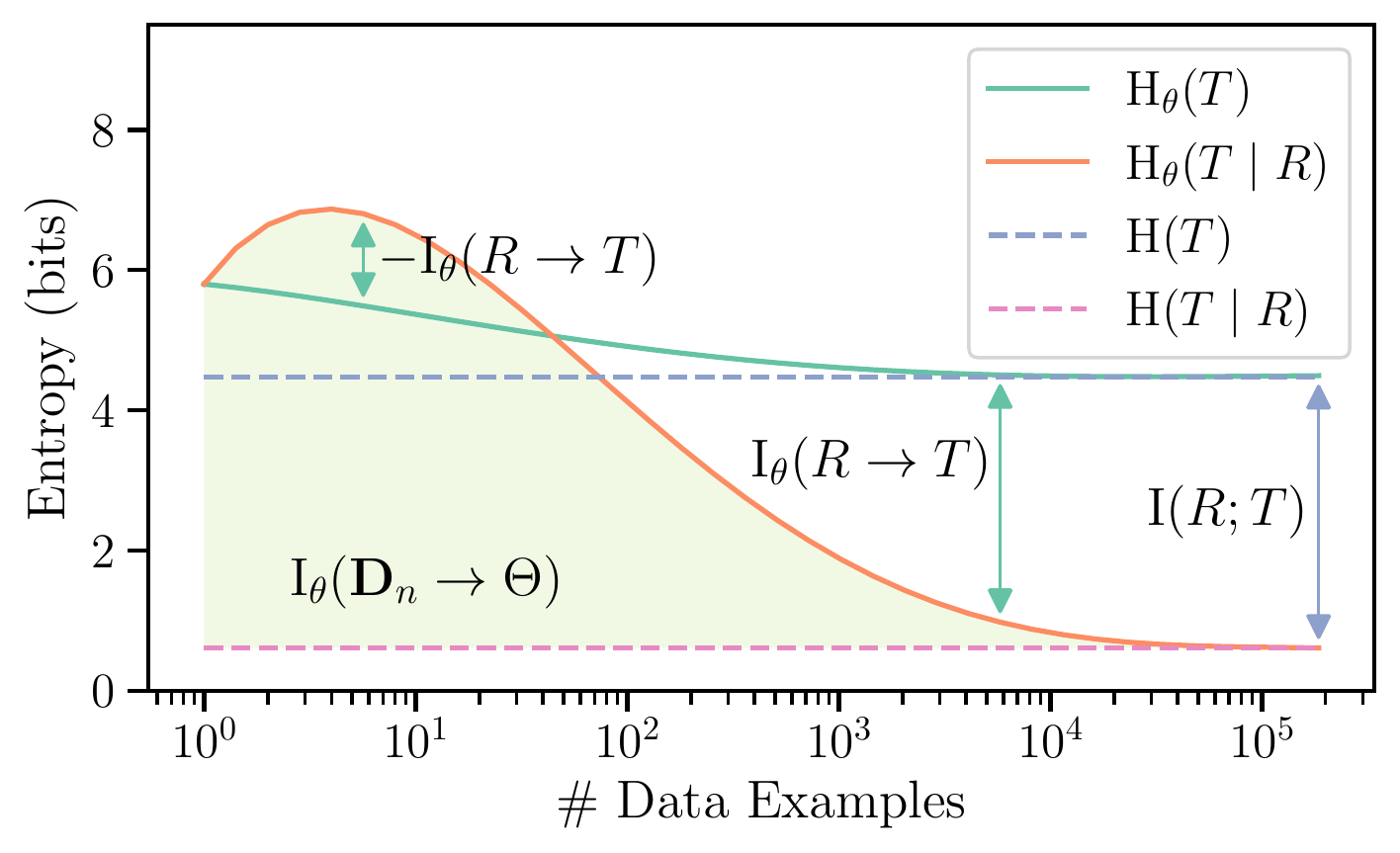}
    \vspace{-20pt}
    \caption{A smoothed example of Bayesian MI on dependency arc labelling in English.}
    \label{fig:plot_explanation}
\tagmcend
\end{figure}

In his seminal work, \citet{shannon1948mathematical} occupied himself with the \emph{limit} of communication.
Indeed, mutual information can be described as the theoretical limit (or upper-bound) of how much information can be extracted from one random variable about another. 
However, this limit is only achievable when one has full knowledge of these random variables, including the true probability distribution according to which they are distributed. 
In practice, we will not have access to such information and it may be difficult to approximate. 
It follows that any system with imperfect knowledge of the random variable's true distribution will only be able to extract a subset of this information. 

With this in mind, we propose and motivate an agent-based framework for measuring information.
We term our quantity \defn{Bayesian mutual information} and show that it generalises Shannon's MI, holistically accounting for uncertainty within the Bayesian paradigm---measuring the amount of information a rational agent could extract from a random variable \emph{under partial knowledge of the true distribution}. 
In addition to the definition, our paper provides many useful theoretical results. 
For instance, we prove that conditioning does \emph{not} necessarily reduce 
Bayesian entropy and that Bayesian mutual information does \emph{not} obey
the data-processing inequality. 
We argue that these properties make our Bayesian framework ideal
for an analysis of learned representations.

In the empirical portion of our paper, we investigate both part-of-speech tagging and dependency arc labelling.
Moreover, our information-theoretic measure holistically captures a notion of ease of extraction, limiting the amount of data available to solve the task.
Intuitively, Bayesian MI shows that high dimensional representations, 
such as BERT, actually hurt performance in the very low-resource scenario, 
making less information available to a Bayesian agent than a simple categorical distribution.
This is because, when little data is available, these agents overfit to the evidence under weak priors.
In the high-resource scenario of English, ALBERT dominates the curves, making more information available than other contextualised embedders.
In short, Bayesian mutual information reconciles the probing literature with its frequently posed question: \textit{how much information can be extracted from these representations?}

\section{Background: Information Theory} \label{sec:information_theory}

Information theory \citep{shannon1948mathematical,cover1991elements}
provides us with a number of tools to analyse data and their associated probability distributions---among which are the entropy and mutual information. These are traditionally defined according to a ``true'' probability distribution, i.e. $p(x)$ or $p(x, y)$%
\footnote{We use uppercase letters to denote random variables ($X$, $Y$), lowercase for their instances ($x$, $y$), and calligraphic fonts for their domain space ($x \in \calX$, $y \in \calY$).}
which may not be known, but dictates the behaviour of random variables $X$ and $Y$. 
The atomic unit of information theory is the \defn{surprisal}, which is defined as follows:
\begin{equation} \label{eq:surprisal}
    \ent(X=x) = - \log p(x)
\end{equation}
Arguably, the most important information-theoretic definition is its expected value, termed \defn{entropy}:
\begin{equation}
    \ent(X) \defeq - \sum_{x \in \calX} p(x) \log p(x)
\end{equation}
Finally, another important concept is the \defn{mutual information} (MI) between two random variables
\begin{align}
    \MI(X; Y) &\defeq \ent(X) - \ent(X \mid Y)
\end{align}
Unfortunately, information theory has a few properties which do not conform to our intuitions about the mechanics of information in machine learning:

\begin{enumerate}[(i)]
\item \textbf{Data Does Not Add Information:}
The entropy is defined according to a source distribution $p(x)$. So, if multiple instances of $X$ are sampled i.i.d. from $p(x)$, access to a set $\vdn = \left\{x^{(1)}, \dots, x^{(N)}\right\}$
of such instances cannot provide any information, i.e. $\ent(X \mid \calDn=\vdn)=\ent(X)$. 
\item\textbf{Conditioning Reduces Entropy:}
Another basic result from information theory is that conditioning cannot increase entropy, only reduce it, i.e. $\ent(X \mid Y) \le \ent(X)$. This implies datapoints can never be misleading, which is not true in practice.\looseness=-1
\item\textbf{Data Processing Does Not Help:}
The data processing inequality states that processing some random variable with a function $f(\cdot)$ can never increase how informative it is, but only reduce its information content, i.e. $\MI(X; f(Y)) \le \MI(X; Y)$.
\end{enumerate}

\subsection{Background: Belief Entropy}

A related question that arises is how to estimate information in scenarios where the true distribution is not known. 
For instance, what is the surprisal of a learning agent with a \defn{belief} $\ptheta(x)$ who encounters an instance $x$?
The straightforward answer would be to use \cref{eq:surprisal}---nonetheless, this agent does not know the true distribution $p(x)$.
This agent's surprisal is usually taken according to its belief:
\begin{equation}\label{eq:belief_surprisal}
    \ent_b(X = \vx) = - \log \ptheta(\vx)
\end{equation}
Similarly, this agent's entropy has been historically defined exclusively according to this belief distribution \citep{gallistel2011memory}:
\begin{equation} \label{eq:belief_entropy}
    \ent_b(X) \defeq - \sum_{\vx \in \calX} \ptheta(\vx) \log \ptheta(\vx)
\end{equation}
We term this the \defn{belief-entropy}. We can further extend this to a \defn{belief mutual information}:%
\footnote{This definition is found in both the cognitive sciences \citep{gallistel2011memory,fan2014information,sayood2018information} as well as in active learning \citep{houlsby2011bayesian,kirsch2019batchbald}.}
\begin{align} \label{eq:belief_mi}
    \MI_b(X; Y) &\defeq \ent_b(X) - \ent_b(X \mid Y)
\end{align}
We note this definition is not grounded in the true distribution in any form.
In fact, about the belief mutual information, \citet{gallistel2011memory} state: ``the subjectivity that it implies is deeply unsettling [...] the amount of information actually communicated is not an objective function of the signal from which the subject obtained it''.

\section{A Bayesian Approach to Information}

The primary motivation for this paper is developing a series of tools that help us overcome the limitations of traditional information theory as applied to machine learning.
Specifically, probing representations requires a data-dependent information theory.
We thus formulate analogues of surprisal, entropy and MI in terms of Bayesian agents---using a framework heavily inspired by Bayesian experimental design \citep{lindley1956measure}.
We then prove this framework does not suffer the same infelicities as standard information theory in this context.

\subsection{An Agent-Based Information Theory}

Our discussions will focus on Bayesian agents, so we start by formally defining them.

\begin{defnt} \label{dfn:bayesian_agent}
A \defn{Bayesian agent} is a parameterised probability distribution $\ptheta(\vx \mid \vtheta)$ (or set of distributions) and a prior $\ptheta(\vtheta)$.\footnote{In the case that the Bayesian
agent has more than one distribution, we still only have a single prior without loss of generality. Indeed, separate priors for each distribution is a special case where the parameters are partitioned. In the case of $\vtheta = [\vphi;\vpsi]$, we could define $\ptheta(\vtheta)=\ptheta(\vphi)\cdot\ptheta(\vpsi)$.
}
Given data $\vdn = \{\vx^{(1)}, \ldots, \vx^{(N)}\}$, the \defn{Bayesian posterior}
over $\vtheta$ is%
\begin{equation}
    \ptheta(\vtheta \mid \vdn) \propto \prod_{n=1}^N\ptheta(\vx^{(n)} \mid \vtheta)\, \ptheta(\vtheta)
\end{equation}
Analogously, the \defn{Bayesian belief} is defined as the following posterior predictive distribution%
\begin{equation}
    \ptheta(\vx \mid \vdn) = \int \ptheta(\vx \mid \vtheta)\,\ptheta(\vtheta \mid \vdn)\,\mathrm{d}\vtheta
\end{equation}
\end{defnt}
\noindent Upon encountering an instance $\vx$, and after seeing a collection of data $\vdn$, this agent's \defn{posterior-predictive Bayesian surprisal} will be:%
\begin{equation}\label{eq:agent_surprisal}
    \enttheta(X = \vx \mid \calDn= \vdn) = - \log \ptheta(\vx\mid \vdn)
\end{equation}
where $\calDn$ is a data-valued random variable; for notational succinctness, we omit this random variable for the rest of the paper. 
We further define the \defn{posterior-predictive Bayesian entropy}:
\begin{align} \label{eq:agent_entropy}
    \enttheta(X \mid \vdn) = - \sum_{\vx \in \calX} p(\vx) \log \ptheta(\vx \mid \vdn)
\end{align}
As can be readily seen, the Bayesian entropy
is the expected value of the Bayesian surprisal---with this expectation taken over the true distribution.%
\footnote{We put this in contrast to \cref{eq:belief_entropy}---which takes this expectation over the belief itself---since the instances are in practice encountered with this true frequency. This distinction has been explicitly noted before, by e.g. \citet{bartlett1953statistical}. 
}
In this sense, the Bayesian entropy is a cross-entropy rather than a standard entropy.

\subsection{Bayesian Mutual Information}
Defining Bayesian mutual information within our framework requires
a bit more care. 
First, in contrast to surprisal and entropy, mutual information
is a functional of two random variables.
We will name the second random variable $Y$.
To talk about mutual information, we will consider a Bayesian agent with a
collection of at least two beliefs, e.g. $\{\ptheta(\vx), \ptheta(\vx \mid \vy)\}$.
The second belief is conditional, but otherwise follows
\cref{dfn:bayesian_agent}. 

\begin{defnt}
Given a collection of data $\vdn = \{(\vx^{(1)}, \vy^{(1)}), \ldots, (\vx^{(N)}, \vy^{(N)})\}$, and a Bayesian agent with a pair of beliefs $\ptheta(\vx \mid \vtheta)$ and $\ptheta(\vx \mid \vy, \vtheta)$ and a prior $\ptheta(\vtheta)$, the \defn{Bayesian mutual information} (Bayesian MI) is defined as
\begin{align}
    \MItheta(Y \rightarrow  &X \mid \vdn) \defeq \\
    &\enttheta(X \mid \vdn) - \enttheta(X \mid Y, \vdn) \nonumber
\end{align}
\end{defnt}

\noindent There is an important distinction between the Bayesian and Shannon MI---Bayesian MI decomposes as the difference between two cross-entropies, as opposed to two entropies.\footnote{Cross mutual information (XMI) has been used in several previous work such as \citep{pimentel-etal-2019-meaning,pimentel2020information,bugliarello-etal-2020-easier,mcallester2020formal,torroba2020intrinsic,fernandes-etal-2021-measuring,oconnor-andreas-2021-context}. In those works, though, it was usually interpreted as a computational approximation to the truth-MI (or to $\calV$-information \citep{xu2020theory}, which is discussed later in the paper). In this work, we highlight the Bayesian MI's (and XMI's) relevance as a generalisation of Shannon's MI.}

\subsection{An Illustrative Example} \label{sec:short_example}

\noindent For the sake of argument, we assume two independent categorical random variables $X$ and $Y$, both with $c$ classes and uniformly distributed.
\begin{align}
    p(x) = \frac{1}{c}, 
    ~~~ 
    p(x \mid y) = \frac{1}{c}
\end{align}
We further assume a Bayesian agent with two categorical beliefs $\{\ptheta(\vx)=\categorical(\vtheta), \ptheta(\vx \mid \vy)=\categorical(\vtheta+\vy)\}$---where $\vy$ is assumed to be encoded as a one hot vector---and a Dirichlet prior $\ptheta(\vtheta)=\dirichlet(\valpha)$ with concentration parameters $\valpha=\mathbf{1}$. 
Note that this (biased) agent believes that $\vy$ and $\vx$ are more likely than chance to share a class.
Given no data, or given $\vd_0$, this agent's prior predictive distributions are:\looseness=-1
\begin{align}
    \ptheta(x) = \frac{1}{c}, 
    \,\,\, 
    \ptheta(x \mid y) = \left\{ \begin{array}{cc}
        \frac{1}{c+1}, & \vx \neq \vy \\
        \frac{2}{c+1}, & \vx = \vy
    \end{array} \right.
\end{align}
In this example:
\begin{enumerate}[(i)]
    \item \textbf{Mutual Information.} We have $\MI(X; Y) = 0$ because $X$ and $Y$ are independent by construction.
    \item \textbf{Belief Mutual Information.} The belief-MI is positive, since the agent's uncertainty about $X$ is reduced by knowledge of $Y$---the prior predictive $\ptheta(x)$ is uniform, while the conditional distribution $\ptheta(x \mid y)$ is not, which reduces the belief-entropy. This means that $\ent_b(X) > \ent_b(X \mid Y)$, implying that $\MI_b(X; Y) > 0$.
    \item \textbf{Bayesian Mutual Information.} Finally, the Bayesian MI is negative---since $\vx$ is uniformly distributed, the unconditional Bayesian entropy is tight, i.e.  $\enttheta(X \mid \vd_0)=\ent(X)$, but the conditional one is not, i.e. $\enttheta(X \mid Y, \vd_0) > \ent(X \mid Y) = \ent(X)$. We thus have $\MItheta(X; Y) < 0$. This entails that, on this specific example, an agent's predictive power over $X$ is lower when given $Y$.
\end{enumerate}
This illustrates an important aspect of Bayesian MI: it is grounded on the true distribution.

\subsection{Theoretical Properties}

We now prove a few relevant theoretical properties about our framework. 
We show that Bayesian MI is symmetric if and only if the agent's beliefs respect Bayes' rule.
Then, we discuss why it does not respect the data-processing inequality, and its connection to mutual information and to $\mathcal{V}$-information \citep{xu2020theory}.\looseness=-1

\subsubsection{When is Bayesian MI Symmetric?}
It is a well known result that Shannon's MI is symmetric, i.e.
\begin{align}
    \MI(X; Y) &= \ent(X) - \ent(X \mid Y) \\
    &= \ent(Y) - \ent(Y \mid X) = \MI(Y; X)\nonumber
\end{align}
This means that the knowledge one can extract from random variable $Y$ about $X$ is the same as the knowledge one can extract from $X$ about $Y$. 
This is not true in general for Bayesian MI; 
as we will show, information-theoretic symmetry and Bayes rule are tightly related.
As such, we consider in this section a Bayesian agent with a set of beliefs $\{\ptheta(\vx), \ptheta(\vx \mid \vy), \ptheta(\vy), \ptheta(\vy \mid \vx)\}$. 
We call the agent \defn{consistent} if it respects Bayes' rule, i.e.
\begin{equation}
    \ptheta(\vx \mid \vy) = \frac{\ptheta(\vy \mid \vx)\,\ptheta(\vx)}{\ptheta(\vy)}
\end{equation}
the following theorem characterises when we have symmetry.
\begin{thm}\label{thm:symmetry}
An agent's Bayesian mutual information is symmetric, i.e.
\begin{equation}
    \MItheta(X \rightarrow Y \mid \vdn) =  \MItheta(Y \rightarrow X \mid \vdn) 
\end{equation}
for all distributions $p(x, y)$
if and only if the Bayesian agent is consistent.
\end{thm}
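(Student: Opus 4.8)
The plan is to rewrite each directional Bayesian MI as a single expectation of a log-ratio under the true distribution, and then to reduce the symmetry requirement to a pointwise identity that is exactly Bayes' rule. Unfolding the conditional Bayesian entropy as $\enttheta(X\mid Y,\vdn)=-\sum_{\vx,\vy}p(\vx,\vy)\log\ptheta(\vx\mid\vy,\vdn)$ and writing $p(\vx)=\sum_{\vy}p(\vx,\vy)$, both cross-entropies in the definition acquire the same outer sum, so
\begin{equation}
\MItheta(Y\rightarrow X\mid\vdn)=\sum_{\vx,\vy}p(\vx,\vy)\log\frac{\ptheta(\vx\mid\vy,\vdn)}{\ptheta(\vx\mid\vdn)},
\end{equation}
and symmetrically for $\MItheta(X\rightarrow Y\mid\vdn)$ with $\ptheta(\vy\mid\vx,\vdn)/\ptheta(\vy\mid\vdn)$ in the logarithm.

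Next I would subtract the two expressions and merge the logarithms, yielding
\begin{equation}
\MItheta(Y\rightarrow X\mid\vdn)-\MItheta(X\rightarrow Y\mid\vdn)=\sum_{\vx,\vy}p(\vx,\vy)\,f(\vx,\vy),
\end{equation}
with $f(\vx,\vy)=\log\frac{\ptheta(\vx\mid\vy,\vdn)\,\ptheta(\vy\mid\vdn)}{\ptheta(\vx\mid\vdn)\,\ptheta(\vy\mid\vx,\vdn)}$. The crucial observation is that $f$ depends only on the agent's posterior-predictive beliefs and on the fixed data $\vdn$, hence is constant in $p$. The difference is therefore a linear functional of $p$, and it vanishes for \emph{all} distributions $p(\vx,\vy)$ if and only if $f(\vx,\vy)=0$ for every $(\vx,\vy)\in\calX\times\calY$---as seen by evaluating the functional at point masses, which are themselves admissible choices of $p$.

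Finally, exponentiating $f(\vx,\vy)=0$ gives $\ptheta(\vx\mid\vy,\vdn)\,\ptheta(\vy\mid\vdn)=\ptheta(\vy\mid\vx,\vdn)\,\ptheta(\vx\mid\vdn)$, which is precisely Bayes' rule among the four beliefs; this establishes both directions at once. I expect the main obstacle to be the quantifier bookkeeping around ``for all $p$'': one must confirm that the log-ratio is genuinely $p$-independent, so that the property characterised is Bayes-consistency of the \emph{posterior-predictive} beliefs $\ptheta(\cdot\mid\vdn)$ rather than only of the prior predictives, and---if consistency is read without data---that Bayesian updating preserves it, which holds automatically when all four beliefs arise as marginals and conditionals of one joint posterior predictive. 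A minor technicality is that the logarithm is undefined where a belief is zero; restricting $p$ to the support of the beliefs lets point masses isolate each summand without affecting the conclusion.
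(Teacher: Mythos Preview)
Your argument is correct and follows the same route as the paper: rewrite each Bayesian MI as $\sum_{\vx,\vy}p(\vx,\vy)\log$ of the appropriate belief ratio, then use point masses to force the pointwise identity that is Bayes' rule. The only cosmetic difference is that you handle both implications at once by analysing when the linear functional $p\mapsto\sum p\,f$ vanishes identically, whereas the paper treats the $\Rightarrow$ and $\Leftarrow$ directions separately; your caveat about the logarithm's support and about consistency being stated at the posterior-predictive level is also a fair observation that the paper leaves implicit.
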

\begin{proof}
See \cref{app:proof_symmetry}.
\end{proof}

\subsubsection{No Data-Processing Inequality} \label{sec:function}

\noindent Another classical result from information theory is the data processing inequality. This theorem states that processing a random variable can never add information, only reduce it
\begin{equation} \label{eq:data_inequality}
    \MI(X; Y) \ge \MI(X; f(Y))
\end{equation}
Although theoretically sound, this theorem is very unintuitive from a practical perspective---effectively, processing noisy data can make it more useful.
In fact, representation learning is a subfield of machine learning devoted precisely to finding functions which can extract more ``informative'' representations from some input.
One such example is BERT \citep{devlin-etal-2019-bert}, a large pre-trained language model which produces contextualised representations from sentential inputs.
These representations provably contain the exact same information about any task as the original sentence \citep{pimentel2020information}---in practice, though, they are much more useful for downstream models.

The data processing inequality does not hold for Bayesian information, making it a more intuitive information-theoretic measure for probing;
pre-trained representation extraction functions can increase MI from a Bayesian agent perspective.
\begin{thm} \label{thm:data_processing_inequality}
The data processing inequality does not hold for Bayesian information, i.e.
\begin{equation} \label{eq:data_observer_inequality}
    \MItheta(Y \rightarrow X \mid \vdn) ~?~ \MItheta(f(Y) \rightarrow X \mid \vdn)
\end{equation}
\end{thm}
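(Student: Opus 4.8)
The plan is to prove the statement by \emph{counterexample}, since the theorem only asserts that the inequality \emph{fails}. First I would reduce the comparison to the conditional terms. Because the unconditional cross-entropy $\enttheta(X \mid \vdn)$ appears identically in both $\MItheta(Y \rightarrow X \mid \vdn)$ and $\MItheta(f(Y) \rightarrow X \mid \vdn)$, it cancels, and their difference is
\begin{align*}
    &\MItheta(f(Y) \rightarrow X \mid \vdn) - \MItheta(Y \rightarrow X \mid \vdn) \\
    &\quad = \enttheta(X \mid Y, \vdn) - \enttheta(X \mid f(Y), \vdn).
\end{align*}
Thus processing \emph{raises} the Bayesian MI exactly when the agent predicts $X$ better (lower cross-entropy) from $f(Y)$ than from the raw $Y$. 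The Shannon-style bound would require this difference to be non-positive; but unlike the Shannon case, where the true conditional can only help, these are cross-entropies against the \emph{true} distribution under a possibly misspecified belief, so there is no reason the difference cannot be strictly positive.

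Second, I would instantiate the construction with the independent-uniform setup of \cref{sec:short_example}: $X$ and $Y$ are independent and uniform over $c$ classes, and the agent pairs a uniform prior predictive $\ptheta(x) = \frac{1}{c}$ with the \emph{biased} conditional belief $\ptheta(x \mid y) = \categorical(\vtheta + y)$ under a $\dirichlet(\valpha)$ prior. I take $f$ to be the \emph{constant} (maximally lossy) map. Then the agent's belief conditioned on $f(Y)$ collapses to its marginal belief, $\ptheta(x \mid f(y)) = \ptheta(x) = \frac{1}{c}$, so $\enttheta(X \mid f(Y), \vd_0) = \enttheta(X \mid \vd_0) = \ent(X)$, since the uniform marginal belief is exactly calibrated to the truth. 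In contrast, \cref{sec:short_example} already establishes that the biased conditional belief is miscalibrated against the genuinely independent data, giving $\enttheta(X \mid Y, \vd_0) > \ent(X)$. Substituting into the displayed identity yields $\MItheta(f(Y) \rightarrow X \mid \vd_0) = 0 > \MItheta(Y \rightarrow X \mid \vd_0)$, which directly contradicts the bound $\MItheta(Y \rightarrow X \mid \vdn) \ge \MItheta(f(Y) \rightarrow X \mid \vdn)$.

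Finally, I would stress that this is expected behaviour rather than a pathology: the violation is driven entirely by the agent's misspecification—a harmful prior bias that conditioning on $Y$ activates, but which discarding $Y$ through $f$ neutralises. I expect the main subtlety to be not the arithmetic but the modelling justification that the agent's belief conditioned on a constant reduces to its marginal belief $\ptheta(x)$; once this is granted, the inequality falls out immediately. To fully motivate the ``$?$'' in the statement—that the two quantities admit no fixed ordering—I would also remark that the reverse direction is easy to obtain (e.g.\ when the agent is well specified and $f$ is lossy on truly informative $Y$, processing \emph{decreases} the Bayesian MI, recovering the Shannon intuition), so neither $\ge$ nor $\le$ holds in general.
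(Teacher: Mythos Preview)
Your counterexample is correct, but it differs substantively from the paper's.  You exploit the biased-agent construction of \cref{sec:short_example} with a \emph{constant} (maximally lossy) $f$: discarding $Y$ neutralises the agent's harmful conditional bias, so $\MItheta(f(Y)\rightarrow X\mid\vd_0)=0>\MItheta(Y\rightarrow X\mid\vd_0)$.  The paper instead builds a \emph{well-formed} agent whose only imperfection is posterior uncertainty about a nuisance parameter $\theta$, and takes $f$ to be the \emph{bijective} map $f(y)=y-\hat\theta$ that mean-centres $Y$.  Because $f$ absorbs the unknown parameter, the agent's belief $\ptheta(x\mid f(y))$ no longer depends on $\theta$ and becomes exact, whereas $\ptheta(x\mid y,\vdn)$ must still integrate over posterior uncertainty in $\theta$; this yields $\MItheta(Y\rightarrow X\mid\vdn)\le\MItheta(f(Y)\rightarrow X\mid\vdn)$, strictly whenever the posterior is not a point mass at $\hat\theta$.

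What each buys: your route is shorter and recycles an example already in the paper, and you are right that the subtlety is stipulating the agent's belief on the constant input---once you declare $\ptheta(x\mid f(y))=\ptheta(x)$ as part of the belief set, the arithmetic is immediate.  The paper's route is heavier but makes a sharper conceptual point for the probing application: the violation does \emph{not} require misspecification, and it can occur even when $f$ loses no Shannon information (bijective), which directly motivates why pre-trained representation extractors can raise Bayesian MI.  Your construction, by contrast, could be read as ``bad belief on $Y$, good belief on $f(Y)$,'' which establishes the theorem but is less tied to the paper's narrative about representation learning.
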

\begin{proof}
See \cref{app:proof_data_inequality}.
\end{proof}

\subsubsection{Relation to Mutual Information}

The relationship between Bayesian mutual information and Shannon MI is relevant for our discussion.
As mentioned in the introduction, \citeauthor{shannon1948mathematical} was concerned with the limits of communication when he defined his measure. 
We now put forward an intuitive theorem about Bayesian information; it is upper-bounded by the true MI under a weak assumption about the agent's beliefs.

\begin{thm} \label{thm:lower_bound_mi}
Assuming the agent's belief $\ptheta(\vx \mid \vdn)$ has a smaller Kullback--Leibler (KL) divergence when compared to the true $p(\vx)$ than the marginal of its beliefs over $\vy$, i.e. 
\begin{align}
    \KL\bigg(p(x) &\mid\mid \ptheta(x \mid \vdn) \bigg) \le \\
    &\KL\left(p(x) \mid\mid \sum_{\vy \in \calY}\ptheta(x \mid \vy, \vdn)\, p(\vy) \right) \nonumber
\end{align}
We show
\begin{equation}
    \MItheta(Y \rightarrow X \mid \vdn) \le \MI(X; Y)
\end{equation}
\end{thm}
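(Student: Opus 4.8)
The plan is to rewrite both Bayesian cross-entropies as the corresponding true entropy plus a Kullback--Leibler correction term, take the difference of the two mutual informations, and then invoke the joint convexity of the KL divergence to connect the theorem's hypothesis to that difference.

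First I would unpack the cross-entropies. Since $\enttheta(X \mid \vdn) = -\sum_{\vx} p(\vx)\log\ptheta(\vx\mid\vdn)$, adding and subtracting $\sum_\vx p(\vx)\log p(\vx)$ gives the standard cross-entropy decomposition
\begin{equation}
    \enttheta(X \mid \vdn) = \ent(X) + \KL\big(p(x) \mid\mid \ptheta(x\mid\vdn)\big).
\end{equation}
Expanding $p(\vx,\vy) = p(\vy)\,p(\vx\mid\vy)$ inside the conditional cross-entropy $\enttheta(X \mid Y, \vdn) = -\sum_{\vx,\vy} p(\vx,\vy)\log\ptheta(\vx\mid\vy,\vdn)$ yields analogously
\begin{equation}
    \enttheta(X \mid Y, \vdn) = \ent(X \mid Y) + \sum_{\vy \in \calY} p(\vy)\,\KL\big(p(x\mid \vy) \mid\mid \ptheta(x\mid \vy,\vdn)\big).
\end{equation}
Substituting both into the definitions of $\MItheta(Y\rightarrow X\mid\vdn)$ and $\MI(X;Y)$ and subtracting, the true-entropy terms recombine exactly into $\MI(X;Y)$, leaving
\begin{equation}
    \MI(X;Y) - \MItheta(Y\rightarrow X\mid\vdn) = \sum_{\vy \in \calY} p(\vy)\,\KL\big(p(x\mid \vy) \mid\mid \ptheta(x\mid \vy,\vdn)\big) - \KL\big(p(x) \mid\mid \ptheta(x\mid\vdn)\big).
\end{equation}
It therefore suffices to show that the (averaged) conditional KL term dominates the unconditional one.

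The bridge is the joint convexity of the KL divergence. The true marginal $p(x) = \sum_{\vy} p(\vy)\,p(x\mid \vy)$ and the marginalised belief $\sum_{\vy} \ptheta(x\mid \vy,\vdn)\,p(\vy)$ are convex combinations, with the \emph{same} weights $p(\vy)$, of the conditional distributions appearing in the sum. Hence joint convexity gives
\begin{equation}
    \KL\bigg(p(x) \mid\mid \sum_{\vy \in \calY}\ptheta(x\mid \vy,\vdn)\,p(\vy)\bigg) \le \sum_{\vy \in \calY} p(\vy)\,\KL\big(p(x\mid \vy) \mid\mid \ptheta(x\mid \vy,\vdn)\big).
\end{equation}
Chaining this with the theorem's hypothesis, which upper-bounds $\KL(p(x)\mid\mid\ptheta(x\mid\vdn))$ by exactly the left-hand side above, I obtain $\KL(p(x)\mid\mid\ptheta(x\mid\vdn)) \le \sum_{\vy} p(\vy)\,\KL(p(x\mid \vy)\mid\mid\ptheta(x\mid \vy,\vdn))$. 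Plugging this into the displayed difference shows it is nonnegative, which is the claim $\MItheta(Y\rightarrow X\mid\vdn) \le \MI(X;Y)$.

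I expect the main obstacle to be recognising that the hypothesis, phrased in terms of the marginalised conditional belief rather than a sum of KLs, is precisely the link that joint convexity converts into the pointwise-averaged conditional KL. Without the assumption there is no reason for the unconditional KL to be smaller than the averaged conditional one, so the convexity step alone is not enough; the bound genuinely relies on the hypothesis to control the unconditional term. The remaining pieces---the cross-entropy decompositions and the convexity inequality---are standard, so once the structure above is laid out the argument should go through with only routine verification.
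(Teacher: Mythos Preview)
Your proposal is correct and follows essentially the same route as the paper: both reduce $\MI(X;Y)-\MItheta(Y\rightarrow X\mid\vdn)$ to a difference of KL terms and then lower-bound the averaged conditional KL by $\KL\big(p(x)\mid\mid\sum_{\vy}\ptheta(x\mid\vy,\vdn)p(\vy)\big)$ before invoking the hypothesis. The only cosmetic difference is that the paper phrases the key inequality as an application of the log-sum inequality (applied pointwise in $x$ with $a_y=p(x\mid y)p(y)$ and $b_y=\ptheta(x\mid y,\vdn)p(y)$), whereas you invoke joint convexity of $\KL$ directly; these are the same inequality.
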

\begin{proof}
See \cref{app:proof_agent_upperbound}.
\end{proof}

\noindent In other words, the information any agent can extract from a random variable $Y$ about another variable $X$ is upper-bounded by the true MI.
We now define a well-formed belief, which we will use to analyse the Bayesian MI's convergence:
\begin{defnt}
We say the belief of a Bayesian agent is \defn{well-formed} if and only if the true distribution is a possible belief, i.e.
\begin{align}
    \exists\,\vtheta: \quad \ptheta(\vtheta) > 0~~\mathrm{and}~~p(\vx) = \ptheta(\vx \mid \vtheta)
\end{align}
\end{defnt}

\noindent Given this definition, we prove %
the Bayesian mutual information converges to the true MI under well-defined conditions.
\begin{thm} \label{thm:convergence_mi}
If we assume a Bayesian agent's set of beliefs and prior are well-formed and meet the conditions
of Bernstein--von Mises Theorem
\citep[pg. 339,][]{bickel2015mathematical}.%
\footnote{In the case where $\vtheta$ is discrete and finite, the only requirement is $\ptheta(\vtheta) > 0$, for all values of $\vtheta$ \citep{freedman1963asymptotic}.}
Then, 
\begin{equation}
    \lim_{N \rightarrow \infty} \MItheta(Y \rightarrow X \mid \vdn) = \MI(X; Y)
\end{equation}
\end{thm}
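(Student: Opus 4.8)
The plan is to show that each of the two cross-entropy terms comprising $\MItheta(Y \rightarrow X \mid \vdn)$ converges to the corresponding Shannon entropy term as $N \rightarrow \infty$. Recall that
\begin{equation}
    \MItheta(Y \rightarrow X \mid \vdn) = \enttheta(X \mid \vdn) - \enttheta(X \mid Y, \vdn),
\end{equation}
so it suffices to establish that $\enttheta(X \mid \vdn) \rightarrow \ent(X)$ and $\enttheta(X \mid Y, \vdn) \rightarrow \ent(X \mid Y)$, from which the claim follows by taking the difference. I would therefore reduce the theorem to a single convergence lemma about a posterior-predictive cross-entropy approaching the true entropy, and then apply it twice---once to the unconditional belief and once to the conditional belief.

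The core of the argument rests on the Bernstein--von Mises theorem, which we are granted by hypothesis. Because the agent's beliefs are \emph{well-formed}, there exists a parameter $\vtheta^{\star}$ with $\ptheta(\vtheta^{\star}) > 0$ and $p(\vx) = \ptheta(\vx \mid \vtheta^{\star})$; this $\vtheta^{\star}$ is the true data-generating parameter. Bernstein--von Mises then guarantees that the posterior $\ptheta(\vtheta \mid \vdn)$ concentrates on $\vtheta^{\star}$ as $N \rightarrow \infty$ (in the discrete--finite case this is the cited result of Freedman requiring only $\ptheta(\vtheta) > 0$ everywhere). First I would use this posterior concentration to argue that the posterior-predictive belief converges to the true distribution, i.e. $\ptheta(\vx \mid \vdn) \rightarrow \ptheta(\vx \mid \vtheta^{\star}) = p(\vx)$ for each $\vx \in \calX$. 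Concretely, since $\ptheta(\vx \mid \vdn) = \int \ptheta(\vx \mid \vtheta)\,\ptheta(\vtheta \mid \vdn)\,\mathrm{d}\vtheta$ is the posterior mean of the bounded, continuous-in-$\vtheta$ map $\vtheta \mapsto \ptheta(\vx \mid \vtheta)$, concentration of the posterior around $\vtheta^{\star}$ forces this integral to $\ptheta(\vx \mid \vtheta^{\star})$.

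Given pointwise convergence of the belief to $p(\vx)$, I would then pass to the cross-entropy. Since
\begin{equation}
    \enttheta(X \mid \vdn) = - \sum_{\vx \in \calX} p(\vx) \log \ptheta(\vx \mid \vdn),
\end{equation}
and $\log$ is continuous wherever $p(\vx) > 0$, the sum converges to $-\sum_{\vx} p(\vx) \log p(\vx) = \ent(X)$. Over a finite domain $\calX$ this is just convergence of a finite sum of continuous functions; over a countable or continuous domain one would invoke dominated convergence, which is where the well-formedness assumption earns its keep by keeping the limiting integrand finite. Applying the identical argument to the conditional belief $\ptheta(\vx \mid \vy, \vdn)$ yields $\enttheta(X \mid Y, \vdn) \rightarrow \ent(X \mid Y)$, and subtracting gives $\MItheta(Y \rightarrow X \mid \vdn) \rightarrow \ent(X) - \ent(X \mid Y) = \MI(X; Y)$.

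The main obstacle I anticipate is the passage from posterior concentration on $\vtheta^{\star}$ to convergence of the \emph{predictive} integral, and then to convergence of its cross-entropy. Bernstein--von Mises delivers weak convergence of the (rescaled) posterior to a Gaussian centred at the truth, but transferring this to convergence of $\int \ptheta(\vx \mid \vtheta)\,\ptheta(\vtheta \mid \vdn)\,\mathrm{d}\vtheta$ requires that $\ptheta(\vx \mid \vtheta)$ be bounded and well-behaved as a function of $\vtheta$ so that the integral against a concentrating posterior converges to the evaluation at $\vtheta^{\star}$; one must also ensure the logarithm does not blow up, i.e. that $\ptheta(\vx \mid \vdn)$ stays bounded away from zero wherever $p(\vx) > 0$, which again follows from well-formedness and concentration. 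Care is needed to make the interchange of limit and sum (or integral) rigorous, but for finite $\calX$ and $\calY$---the setting of the empirical applications---these technicalities collapse and the result is immediate.
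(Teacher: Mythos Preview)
Your proposal is correct and follows essentially the same route as the paper: invoke Bernstein--von Mises to obtain $\ptheta(\vx \mid \vdn) \to p(\vx)$, push this through the logarithm via the continuous mapping theorem to get $\enttheta(X \mid \vdn) \to \ent(X)$, repeat for the conditional term, and subtract. If anything, you are more explicit than the paper about the intermediate step from posterior concentration to predictive convergence and about the domination issues when $\calX$ is infinite; the paper simply asserts the predictive limit and applies continuous mapping in one line.
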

\begin{proof}
See \cref{app:proof_convergence_mi}.
\end{proof}

\subsubsection{Relation to Variational Information}

Variational ($\calV$-) information \citep{xu2020theory} is a recent generalisation
of mutual information.
It extends MI to the case where a fixed family of distributions is considered; in which the true distribution may or not be.
\begin{defnt}
Suppose %
random variable $X$ is distributed according to $p(x)$. 
Let $\calV$ be a variational family of distributions.
Then, $\calV$-entropy is defined
\begin{equation}
    \ent_{\calV}(X) \defeq \inf_{q \in \calV} -\sum_{x \in \calX} p(x) \log q(x)
\end{equation}
and $\calV$-information is defined as 
\begin{equation}
    \MI_{\calV}(Y \rightarrow X) \defeq \ent_\calV (X) -  \ent_\calV (X \mid Y)
\end{equation}
\end{defnt}
\noindent Unlike our Bayesian mutual information, $\calV$-information is not a
data-dependent measure, i.e. $\ent_{\calV}(X \mid \calDn) = \ent_{\calV}(X)$. Thus, it does not meet
our desiderata. 
However, we can prove a straightforward relationship
between the Bayesian and $\calV$ informations,
which we state below.\looseness=-1
\begin{thm} \label{thm:convergence_vmi}
Assume a Bayesian agent's beliefs and prior meet the conditions of \citet{kleijn2012bernstein}, who extend the Bernstein--von Mises Theorem to beliefs which are not well-formed.
Further, let $\calV = \{\ptheta(\cdot \mid \vtheta) \mid \ptheta(\vtheta) > 0 \}$. Then,%
\begin{equation}
    \lim_{N \rightarrow \infty} \MItheta(Y \rightarrow X \mid \vdn) = \MIcalV(Y \rightarrow X)
\end{equation}
\end{thm}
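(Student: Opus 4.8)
The plan is to mirror the proof of \cref{thm:convergence_mi}, replacing the well-specified Bernstein--von Mises theorem with the misspecified version of \citet{kleijn2012bernstein}. The entire argument reduces to showing that each posterior-predictive cross-entropy converges to the corresponding $\calV$-entropy; subtracting the two limits then yields the claim, since $\MIcalV(Y \rightarrow X) = \ent_{\calV}(X) - \ent_{\calV}(X \mid Y)$.

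First I would identify the target of the convergence. Writing the cross-entropy inside the $\calV$-entropy as $-\sum_{\vx} p(\vx) \log \ptheta(\vx \mid \vtheta) = \ent(X) + \KL\!\left(p(x) \mid\mid \ptheta(x \mid \vtheta)\right)$ and noting that $\ent(X)$ does not depend on $\vtheta$, the member of $\calV$ attaining $\ent_{\calV}(X)$ is exactly the Kullback--Leibler projection of $p$ onto $\calV$, i.e.\ $\ptheta(\cdot \mid \vtheta^{\star})$ with $\vtheta^{\star} = \argmin_{\vtheta :\, \ptheta(\vtheta) > 0} \KL\!\left(p(x) \mid\mid \ptheta(x \mid \vtheta)\right)$. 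This \emph{pseudo-true} parameter is precisely the point around which \citeauthor{kleijn2012bernstein}'s theorem guarantees that the misspecified posterior $\ptheta(\vtheta \mid \vdn)$ concentrates as $N \rightarrow \infty$. (Note that when the agent is well-formed, $\vtheta^{\star}$ recovers the true distribution and $\KL = 0$, so this theorem strictly subsumes \cref{thm:convergence_mi}.)

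Next I would pass from posterior concentration to predictive convergence. Because the posterior places asymptotically all of its mass near $\vtheta^{\star}$, and the posterior predictive $\ptheta(\vx \mid \vdn) = \int \ptheta(\vx \mid \vtheta)\, \ptheta(\vtheta \mid \vdn)\, \mathrm{d}\vtheta$ is a posterior average of the map $\vtheta \mapsto \ptheta(\vx \mid \vtheta)$, a bounded/dominated convergence argument---using that $\calX$ is finite and that $\ptheta(\vx \mid \cdot)$ is continuous in a neighbourhood of $\vtheta^{\star}$---gives $\ptheta(\vx \mid \vdn) \rightarrow \ptheta(\vx \mid \vtheta^{\star})$ for every $\vx \in \calX$. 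Since $\calX$ is finite and $\ptheta(\vx \mid \vtheta^{\star}) > 0$ on the support of $p$, continuity of $-\log$ then yields
\begin{equation}
    \lim_{N \rightarrow \infty} \enttheta(X \mid \vdn) = -\sum_{\vx \in \calX} p(\vx) \log \ptheta(\vx \mid \vtheta^{\star}) = \ent_{\calV}(X).
\end{equation}
Applying the identical argument to the second belief $\ptheta(\vx \mid \vy, \vtheta)$ and its posterior (formed from the conditional likelihood on $\vdn$) gives $\enttheta(X \mid Y, \vdn) \rightarrow \ent_{\calV}(X \mid Y)$, and subtracting the two limits produces $\MItheta(Y \rightarrow X \mid \vdn) \rightarrow \MIcalV(Y \rightarrow X)$.

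The main obstacle is the interchange of limit and sum/integral that turns posterior concentration into convergence of the cross-entropy. Bernstein--von-Mises-type results control the posterior in a local, rescaled (total-variation) sense, whereas the cross-entropy integrates the unbounded function $-\log \ptheta(\vx \mid \vtheta)$ against $p$; I would lean on the regularity conditions of \citet{kleijn2012bernstein}---in particular the uniqueness of the KL-minimiser and the integrability of the log-likelihood---to dominate the predictive and to rule out posterior mass escaping to regions where $\ptheta(\vx \mid \vtheta)$ vanishes. A secondary subtlety is that $\ent_{\calV}$ is defined through an infimum that must actually be attained at $\vtheta^{\star}$; but this is exactly the identifiability of a unique pseudo-true parameter that \citeauthor{kleijn2012bernstein} already assume, so it requires no extra work.
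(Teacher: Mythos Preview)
Your proposal is correct and follows essentially the same route as the paper: invoke the misspecified Bernstein--von Mises result of \citet{kleijn2012bernstein} to conclude that the posterior predictive converges to $\ptheta(\cdot \mid \vtheta^{\star})$ at the KL-minimiser, pass the limit through the finite sum to get $\enttheta(X \mid \vdn) \rightarrow \ent_{\calV}(X)$, repeat for the conditional belief, and subtract. If anything, you are more explicit than the paper about the intermediate step from posterior concentration to predictive convergence and about the regularity needed to interchange limit and expectation; the paper simply appeals to the continuous mapping theorem at that point.
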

\begin{proof}
See \cref{app:proof_convergence_vmi}.
\end{proof}

\section{A Framework for Incremental Probing}
The proposed Bayesian framework for information allows us to take into account
the amount of data we have for probing.
Crucially, previous work \cite{pimentel2020information} 
failed to adequately account for the observation of data.
 In doing so, they only analysed the limiting behaviour of information, 
under which the probing enterprise is not fully sensible---given unlimited data and computation, there is no point in using pre-trained functions.
Indeed, the higher-level motivation of this work is to find an information-theoretic framework which serves machine learning, 
and under which the goal of probing \emph{is} inherently clear.
To that end, we propose a relatively simple experimental design. 
We compute Bayesian mutual information, which is a function
of the amount of data, to create several learning curves.\looseness=-1

\paragraph{Notation.}
We define a sentence-level random variable $S$, with instances $\vs$ taken from $V^*$,
the Kleene closure of a potentially infinite vocabulary $V$. 
We further define a representation-valued random variable $R$ and a task-valued random variable $T$, each with instances $\vr \in \R^d$ and $\vt \in \calT$, where $\calT$ is the set of possible values for the analysed task (e.g. the set of parts of speech in a language).

\subsection{Probes as Bayesian Agents}

The overall trend in NLP is to train supervised probabilistic models on task-specific data.
We believe probabilistic probes should analogously be modelled this way---leading to results compatible with our empirical intuitions. 
We thus define a \defn{probe agent} as a Bayesian agent with the pair of beliefs $\{\ptheta(\vt \mid \vtheta), \ptheta(\vt \mid \vr, \vtheta)\}$ and a prior $\ptheta(\vtheta)$.
Any prior $\ptheta(\vtheta)$ could be chosen for our probing agents.
Nonetheless we have no \emph{a priori} knowledge of how the representations should impact our prediction task.
As such, our priors are such that the initial distributions $\ptheta(\vt \mid \vd_0)$ and $\ptheta(\vt \mid \vr, \vd_0)$ are identical.
A logical conclusion, is that the prior Bayesian MI should be zero:\looseness=-1
\begin{align}
    \MItheta(R \rightarrow T \mid \vd_0) = 0
\end{align}
On the opposite extreme---i.e. given unlimited data---a well-formed belief will likely converge to the true distribution, yielding the same results as by \citet{pimentel2020information}. Complementarily, an ill-formed belief will converge to the $\calV$-information:
\begin{align}
    \lim_{N \rightarrow \infty} \MItheta(R \rightarrow T \mid \vdn) 
    &= \MIcalV(R \rightarrow T) \\
    &\approx \MI(S; T) \nonumber
\end{align}

The novelty of our framework lies in the explicit analysis of information under finite data. 
Bayesian agents are used here to measure a notion of information directly related to \defn{ease of extraction}---i.e. how much information could be extracted from the representations by a na\"{i}ve agent with no \emph{a priori} knowledge about the task itself. 
In other words, we ask the question: \emph{given a specific dataset $\vdn$, how much information do the representations yield about this task?} 
This value is only a subset of the true MI, being upper-bounded by it.

\paragraph{Why Bayesian MI and not Bayesian entropy?}
We focus our analysis on the amount of information a Bayesian agent can extract from the representations about the task. 
However, we could as easily analyse the Bayesian entropy instead.
We believe, though, that the Bayesian MI is an inherently more intuitive value than the entropy.
This is because mutual information puts the Bayesian entropy in perspective to a trivial baseline---how much uncertainty would there be \emph{without the representations}. 
Furthermore, it has a much more interpretable value: with no data its value is zero, while at the limit it converges to the true mutual information.
In this paper, we are concerned with its trajectory, i.e., how fast does the Bayesian MI go up?%
\looseness=-1

\newcommand{\plottask}{pos_tag}
\newcommand{\plotsize}{0.25\textwidth}
\newcommand{\trimcommand}{}

\begin{figure*}
    \centering
\tagmcbegin{tag=Figure,alttext={Plots in English, Basque, Marathi and Turkish of the Bayesian MI in part of speech tagging.}}
    \begin{subfigure}[t]{\plotsize}
    \includegraphics[width=\columnwidth]{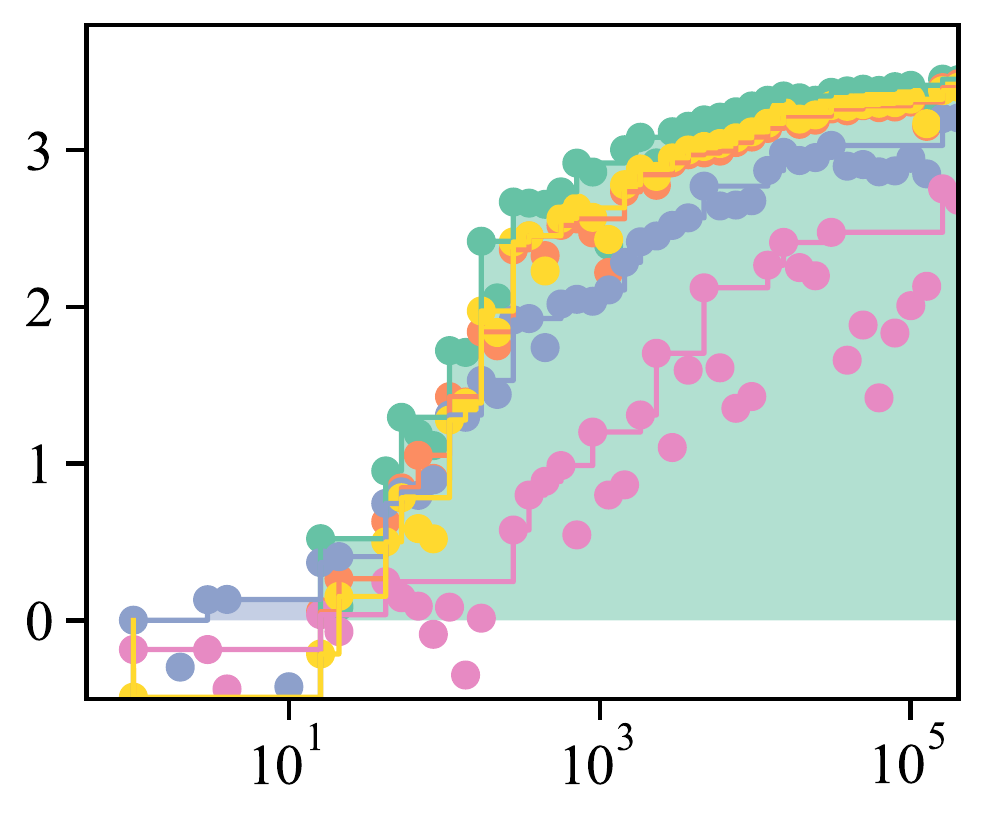}
    \end{subfigure}%
    ~ 
    \begin{subfigure}[t]{\plotsize}
    \includegraphics[width=\columnwidth]{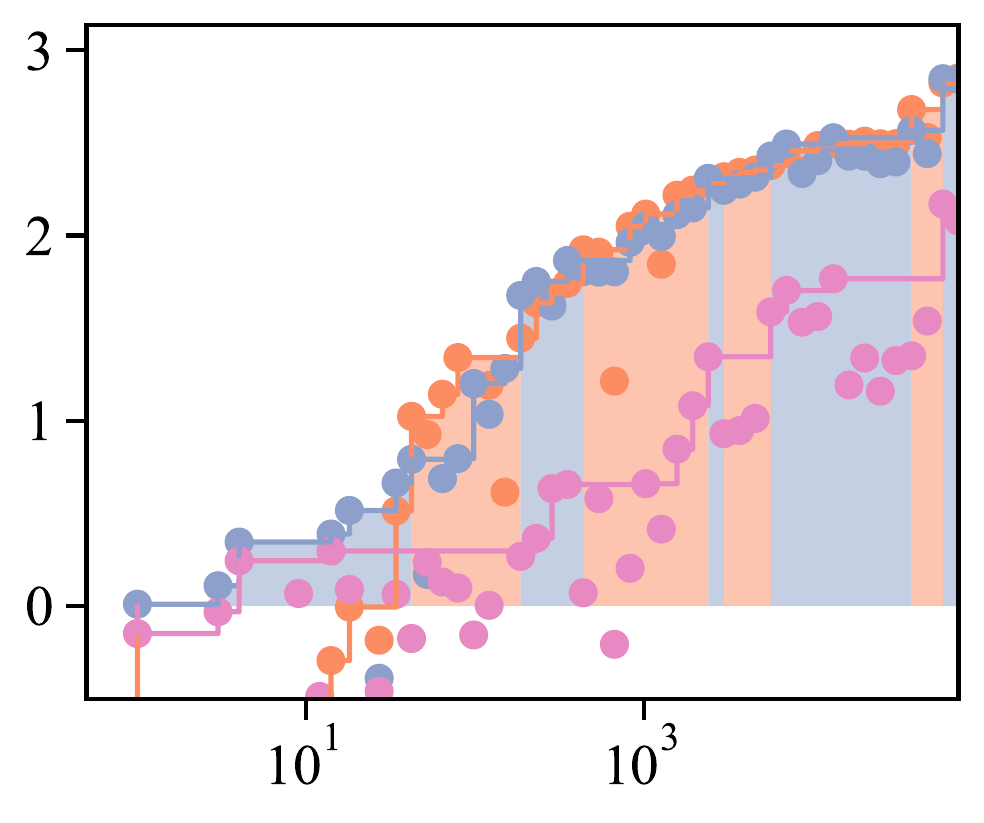}
    \end{subfigure}%
    ~ 
    \begin{subfigure}[t]{\plotsize}
    \includegraphics[width=\columnwidth]{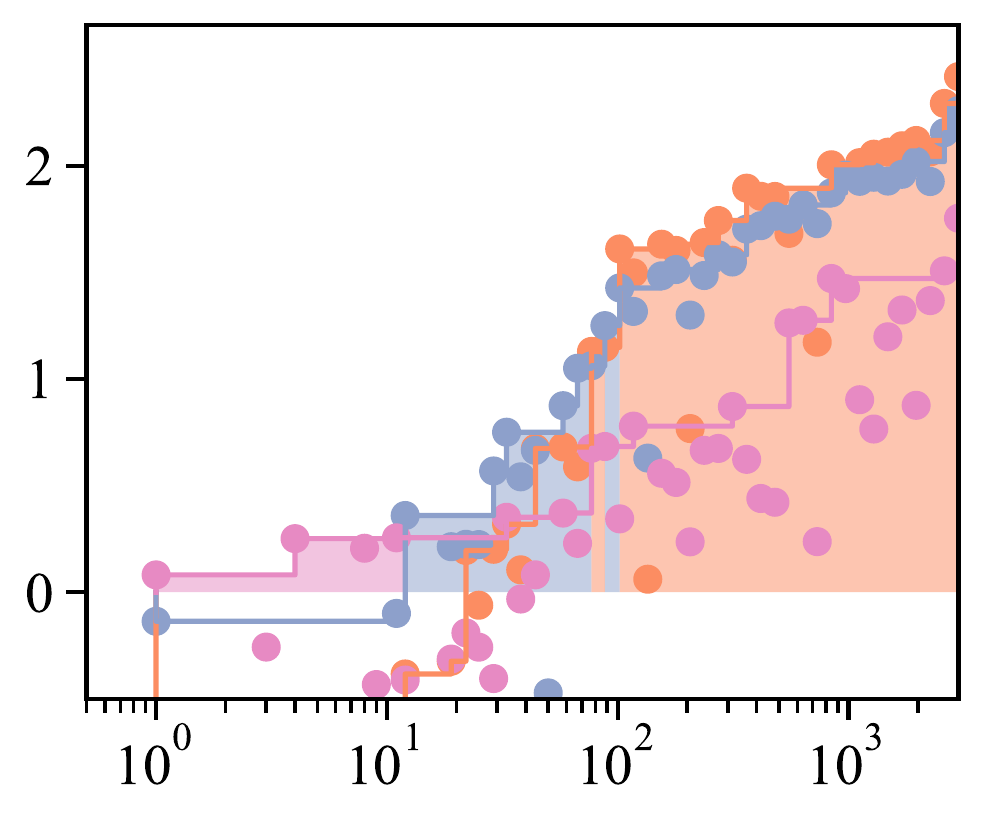}
    \end{subfigure}%
    ~ 
    \begin{subfigure}[t]{\plotsize}
    \includegraphics[width=\columnwidth]{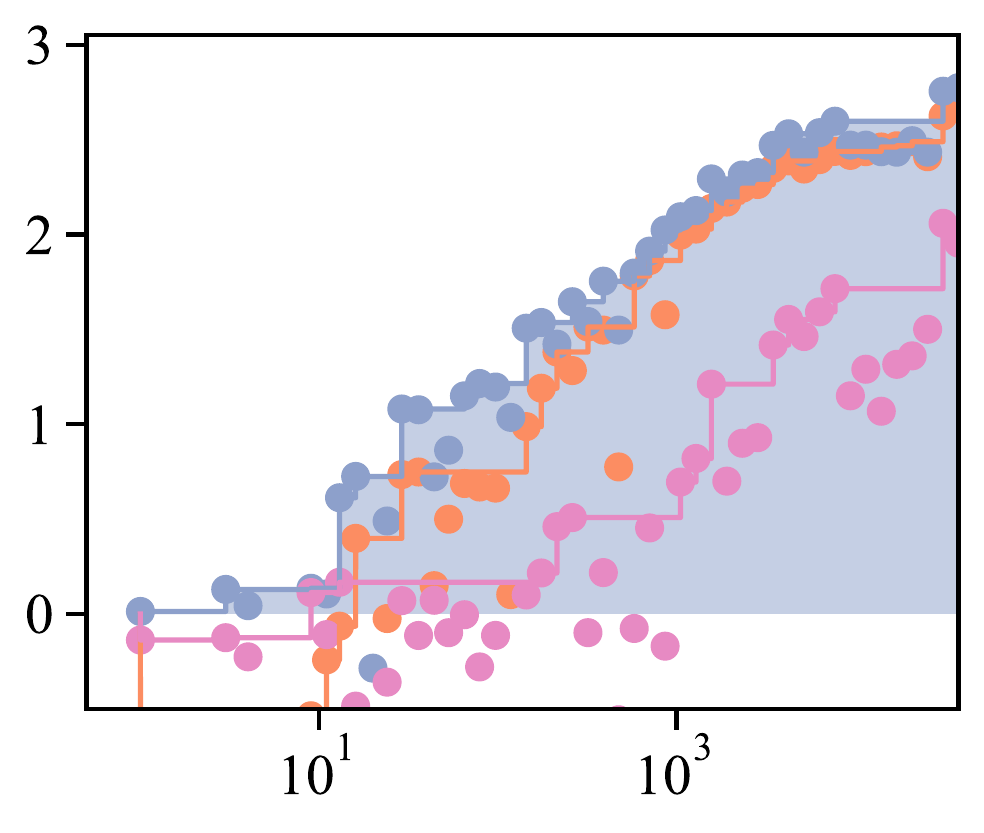}
    \end{subfigure}
    \vspace{-21pt}
    \caption{
    Bayesian MI (bits; $y$-axis) vs number of data examples ($x$-axis) in part of speech tagging on  (left) English (center-left) Basque, (center-right) Marathi, and (right) Turkish.
    {\color{codec0} \textbf{ALBERT}}, {\color{codec5} \textbf{RoBERTa}}, {\color{codec1} \textbf{BERT}}, {\color{codec2} \textbf{fastText}}, {\color{codec3} \textbf{Random}}
    }
    \label{fig:pos_tag}
\tagmcend
\end{figure*}

\subsection{Ease of Extraction and Previous Work}

Generally speaking, the goal of probing is to test if a set of contextual representations encodes a certain linguistic property \citep[\textit{inter alia}]{adi2016fine,belinkov-etal-2017-evaluating,tenney2018what,liu2021probing}.
Most work in this field claims that, when performing this analysis, we should prefer simple models as probes \citep{alain2016understanding,hewitt-liang-2019-designing,voita2020information}.
This is inline with \citeauthor{pimentel2020information}'s results: using a complex probe (complex enough to ensure it is well-formed) with infinite data, we would estimate $\MI(S; T)$---a value which does not meaningfully inform us about the representations themselves. Defining model complexity, though, is not trivial \citep[for a longer discussion see][]{pimentel-etal-2020-pareto}. For this reason, many works limit themselves to studying only linearly encoded information \citep[e.g.][]{alain2016understanding,hewitt-manning-2019-structural,maudslay-etal-2020-tale} or a subset of neurons at a time
\citep[e.g.][]{torroba2020intrinsic,mu2020compositional,durrani2020analyzing}.
However, restricting our analysis this way seems arbitrary.

A few recent papers have tried to deal with probe complexity in a more nuanced way. 
\citet{hewitt-liang-2019-designing} argue for the use of selectivity to control for probe complexity.
\citet{voita2020information} and \citet{whitney2020evaluating} use, respectively, minimum description length (MDL) and surplus description length (SDL) to measure the size (in bits) of the probe model.
\citet{pimentel-etal-2020-pareto} argues probe complexity and accuracy should be seen as a Pareto trade-off, and propose new metrics to measure probe complexity.
All of these papers define ease of extraction in terms of properties of the probe, e.g., its complexity and size.

We argue here for an opposing view of ease of extraction: Instead of focusing on the complexity of the probes, we should define it according to the \defn{complexity of the task}.
We further operationalise this complexity in a very specific way: how much information a Bayesian agent can extract from the representations, given \emph{limited knowledge} about the task itself---where this limited knowledge is enforced by the size of the observed dataset $\vdn$.%
\footnote{As we show later in the paper, this background knowledge about the task can also be formally defined as a Bayesian mutual information, i.e. the information the observed data provides about the model parameters $\MItheta(\calDn \rightarrow \Theta)$.
}
With this in mind, we evaluate the Bayesian MI learning curves.
In this regard, our analysis is similar to the learning curves used by \citet{talmor2019olmpics} and the complexity--accuracy trade-offs from \citet{pimentel-etal-2020-pareto}.

\section{Experiments and Results\footnote{Our code is available in \url{https://www.github.com/rycolab/bayesian-mi}.}}

\subsection{Data and Representations}

We focus on part-of-speech (POS) tagging and dependency-arc labelling in our experiments.
With this in mind, we make use of the universal dependencies \citep[UD 2.6;][]{ud-2.6}; analysing
the treebanks of four typologically diverse languages, namely: Basque, English, Marathi, and Turkish.
As our object of analysis, we look at the contextual representations from ALBERT \cite{lan2019albert}, RoBERTa \cite{liu2019roberta} and BERT \cite{devlin-etal-2019-bert},%
\footnote{We use the pre-trained models made available by the transformers library \citep{wolf2019hugging}.}
using as a baseline the non-contextual fastText \cite{fasttext} and random embeddings. Random embeddings are initialised at the type level and kept fixed during experiments.\looseness=-1

\vspace{-1pt}
\subsection{Probe}
\vspace{-1pt}

Our experiments focus on Bayesian agents with multi-layer perceptron (MLP) beliefs:
\begin{align}
    &\ptheta(\vt \mid \vr, \vtheta) = \softmax(\mathrm{MLP}(\vr; \vphi)) \label{eq:probe_conditional} \\
    &\ptheta(\vt \mid \vtheta) = \categorical(\vpsi) = \frac{\psi_t}{\sum_{t'=1}^{|\calT|} \psi_{t'}} \label{eq:probe_unconditional}
\end{align}
where
$\vphi$ are the MLP parameters, $\vpsi \in \mathbb{N}^{|\calT|}$ is a count vector and $\vtheta = [\vphi; \vpsi]$ are the agent's parameters. 
This agent has a Gaussian prior over parameters $\vphi$ (with zero mean and standard deviation $\sigma=10$),
and a Dirichlet distribution prior over $\vpsi$ (with concentration parameter $\alpha=1$).\looseness=-1

As previously discussed, 
the Gaussian and Dirichlet priors on the parameters will cause these models to initially place a uniform distribution on the output classes---as such,
they will have an initial Bayesian MI of zero.
We then expose the probe agent to increasingly larger sets of data from the task.
Unfortunately, the posterior %
of \cref{eq:probe_conditional} has no closed form solution, so we approximate it with the maximum-a-posteriori probability
$\ptheta(\vt \mid \vr, \vtheta^{*})$,
where $\vtheta^{*} = \argmax_{\vtheta \in \Theta} \ptheta(\vtheta \mid \vd_n)$. 
We obtain this MAP estimate using the gradient descent method AdamW \citep{loshchilov2019decoupled} with a cross-entropy loss and L2 norm regularisation.%
\footnote{L2 weight decay regularisation is equivalent to a Gaussian prior on the parameter space \citep[pg. 350,][]{bishop1995neural}.}
The posterior predictive belief of \cref{eq:probe_unconditional} has a closed-form solution\footnote{This posterior predictive distribution is equivalent to Laplace smoothing \citep{jeffreys1939theory,robert2009harold}.}
\begin{align}
    \ptheta(\vt \mid \vdn) 
    &= \frac{\mathrm{count}(\vdn, t) + 1}{N + |\calT|}
\end{align}
where $\mathrm{count}(\vdn, t)$ is the number of  observed instances of class $t$.

\renewcommand{\plottask}{dep_label}
\renewcommand{\plotsize}{0.25\textwidth}

\begin{figure*}
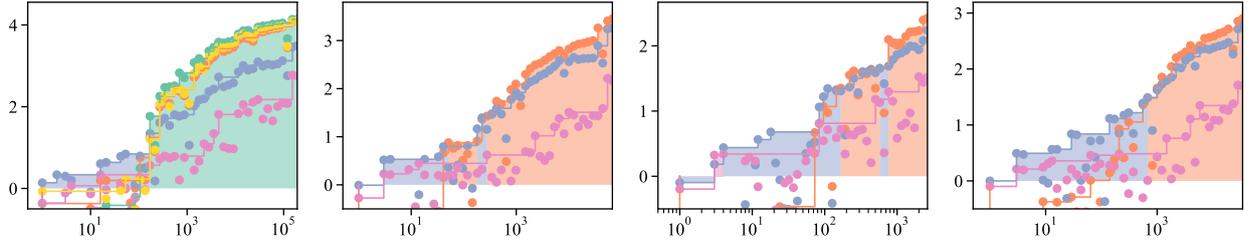

    \centering
\tagmcbegin{tag=Figure,alttext={Plots in English, Basque, Marathi and Turkish of the Bayesian MI in dependency arc labelling.}}
    \begin{subfigure}[t]{\plotsize}
    \includegraphics[width=\columnwidth]{\plottask__english}
    \end{subfigure}%
    ~ 
    \begin{subfigure}[t]{\plotsize}
    \includegraphics[width=\columnwidth]{\plottask__basque}
    \end{subfigure}%
    ~ 
    \begin{subfigure}[t]{\plotsize}
    \includegraphics[width=\columnwidth]{\plottask__marathi}
    \end{subfigure}%
    ~ 
    \begin{subfigure}[t]{\plotsize}
    \includegraphics[width=\columnwidth]{\plottask__turkish}
    \end{subfigure}
    \vspace{-23pt}
    \caption{Bayesian MI (bits; $y$-axis) vs number of examples ($x$-axis) in dependency arc labelling on (left) English (center-left) Basque (center-right) Marathi, and (right) Turkish.
    {\color{codec0} \textbf{ALBERT}}, {\color{codec5} \textbf{RoBERTa}}, {\color{codec1} \textbf{BERT}}, {\color{codec2} \textbf{fastText}}, {\color{codec3} \textbf{Random}}
    }
    \label{fig:dep_arc}
\tagmcend
\end{figure*}

For both analysed tasks, we run 50 experiments with log-linearly increasing data sizes, from 1 instance to the whole language's treebank. 
For each of these individual experiments, we sample an MLP probe configuration. This probe will have 0, 1, or 2 layers---where 0 layers means a linear probe---dropout between 0 and 0.5, and hidden size from 32 to 1024 (log distributed). We then use the same architecture to train a probe for each of our analysed representations, plotting their Pareto curves.\looseness=-1

\subsection{Discussion}

\Cref{fig:pos_tag} presents pareto curves for part-of-speech tagging. These curves convey a few interesting results. The first is the intuitive fact that information is much harder to extract with random embeddings, although with enough training data their results slowly converge to near the fastText ones---this can be seen most clearly in English. 
This matches our theoretical framework: the true mutual information between the target task and either fastText or random embeddings is the same, thus, if our beliefs are well-formed, the Bayesian MI should converge to this value, although with different speeds. 
The second result is that ALBERT makes information more easily extractable than either BERT or RoBERTa in English, and that multilingual BERT is roughly equally as informative as fastText under the finite data scenarios of the other analysed languages. 
Finally, the last result goes against one of the claims of \citet{pimentel-etal-2020-pareto}, who in light of their flat Pareto curves for POS tagging claimed that we needed harder tasks for probing. 
One only needs harder tasks if their measure of complexity is not nuanced enough---as we see, even POS tagging is hard under the low-resource scenarios presented in our learning curves.\looseness=-1

\Cref{fig:dep_arc} presents results for dependency arc labelling. These learning curves also present interesting trends. While the POS tagging curves seem to be on the verge of convergence for English, Basque and Turkish, this is not the case for dependency arc labelling. 
This implies that, as expected, dependency arc labelling is either an inherently harder task, or that the representations encode the necessary information in a harder to extract manner.
These results, also highlight the importance of an information-theoretic measure being able to capture negative information---as evinced in \cref{fig:dep_arc_english}. 
For the low-data scenario, the BERToid models hurt performance, as opposed to helping. 
This is because high-dimensional representations, together with a weak prior, allow the agent to easily overfit to the little presented evidence.
On the other hand, fastText does not present the same problem, having a positive Bayesian MI even in a low-data setting.

\begin{figure}
    \centering
\tagmcbegin{tag=Figure,alttext={Polyfits of the Bayesian MI results in dependency arc labelling in English.}}
    \includegraphics[width=\columnwidth]{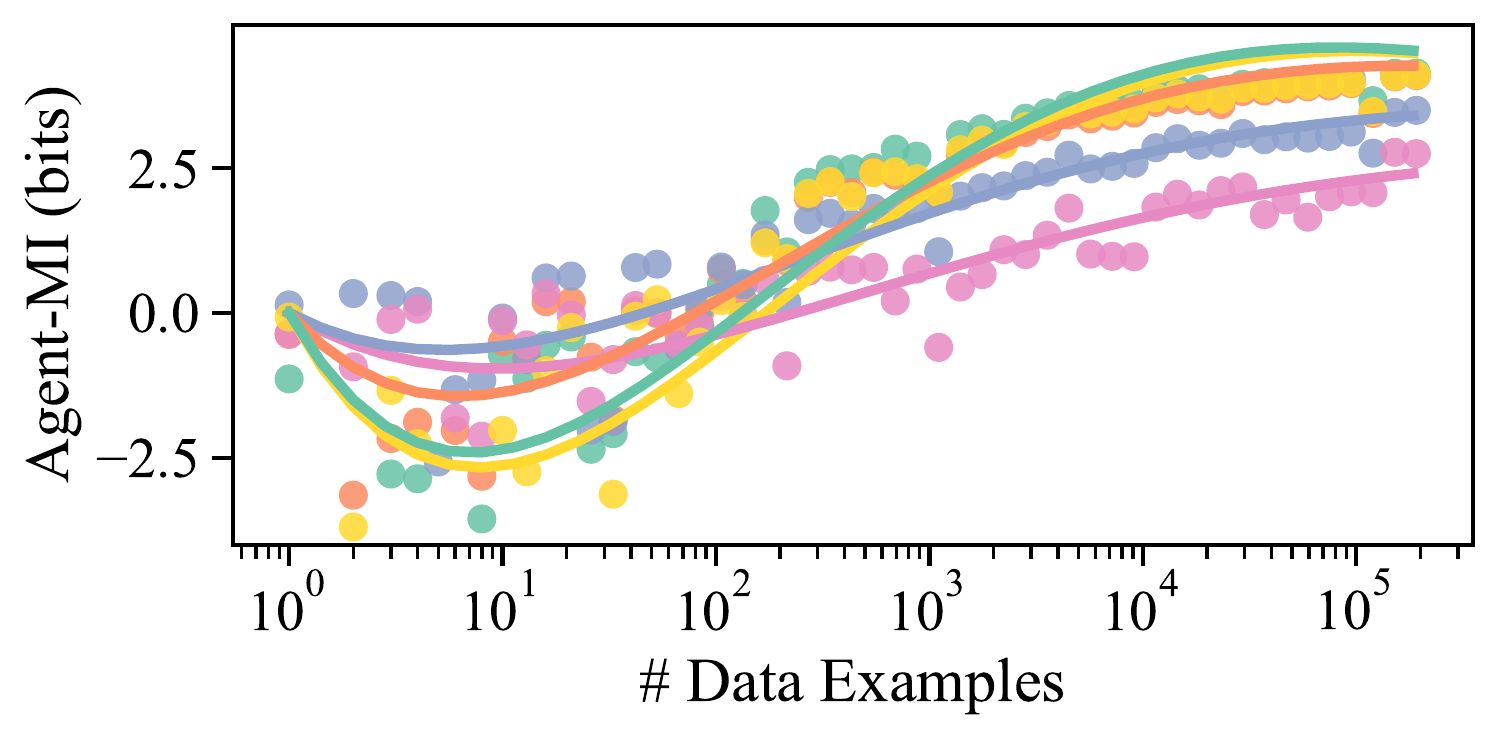}
    \caption{Dependency arc labelling polyfit on English.
    }
    \label{fig:dep_arc_english}
\tagmcend
\end{figure}

\section{An Intuitive Decomposition}

We now present some
basic results about our framework which, although not strictly necessary for the present study, help motivate it.
They also serve as a justification for our choice of cross-entropy when formalising Bayesian entropy.
With this in mind, we analyse information from the perspective of a fully Bayesian agent with a well-formed belief.%
\footnote{We make the same analysis from the perspective of an agent with an ill-formed belief in \cref{sec:ill_formed_decomposition}}
A classic decomposition of the cross-entropy is the following:\looseness=-1%
\begin{equation} \label{eq:cros_kl}
    \enttheta(T) = \ent(T) + \KL(p \mid\mid \ptheta)
\end{equation}
We posit a new interpretation for this equality.

\begin{thm}\label{thm:bayesian_decomposition}
Let $\Theta$ be a parameter-valued random variable.
The entropy of a consistent Bayesian agent with well-formed beliefs decomposes as
\begin{equation}
    \enttheta(T \mid \vdn) = \underbrace{\ent(T)}_{\text{entropy}} + \underbrace{\MItheta(T \rightarrow \Theta \mid \vdn)}_{\text{information about distribution}} \nonumber
\end{equation}
\end{thm}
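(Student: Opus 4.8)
The plan is to reduce the claimed decomposition to the classical cross-entropy identity of \cref{eq:cros_kl} by showing that its Kullback--Leibler term is exactly the Bayesian MI $\MItheta(T \rightarrow \Theta \mid \vdn)$. Generalising \cref{eq:cros_kl} to the data-conditioned belief gives $\enttheta(T \mid \vdn) = \ent(T) + \KL(p \mid\mid \ptheta(\cdot \mid \vdn))$, which follows immediately by expanding the KL divergence and cancelling the $-\sum_t p(t) \log p(t)$ terms. Hence it suffices to prove
\[
    \MItheta(T \rightarrow \Theta \mid \vdn) = \KL\big(p(t) \mid\mid \ptheta(t \mid \vdn)\big),
\]
or equivalently, after rearranging, that $\MItheta(T \rightarrow \Theta \mid \vdn) = \enttheta(T \mid \vdn) - \ent(T)$.

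First I would rewrite the left-hand side in a tractable form. Because the agent is consistent by assumption, \cref{thm:symmetry} lets me swap the direction of the Bayesian MI,
\[
    \MItheta(T \rightarrow \Theta \mid \vdn) = \MItheta(\Theta \rightarrow T \mid \vdn) = \enttheta(T \mid \vdn) - \enttheta(T \mid \Theta, \vdn),
\]
which expresses everything through cross-entropies of the discrete variable $T$ and sidesteps the awkward continuous entropies of $\Theta$. Substituting this into the target identity, the entire claim collapses to the single equality $\enttheta(T \mid \Theta, \vdn) = \ent(T)$: conditioning the agent on its own parameter must recover the true entropy.

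The core of the argument is therefore to establish this equality. I would expand $\enttheta(T \mid \Theta, \vdn)$ as the cross-entropy $-\sum_t \int p(t, \vtheta) \log \ptheta(t \mid \vtheta, \vdn)\,\mathrm{d}\vtheta$ taken over the true joint of $(T, \Theta)$, and then use three facts. (i) Conditional independence: once the parameter is fixed the data is uninformative about $T$, so $\ptheta(t \mid \vtheta, \vdn) = \ptheta(t \mid \vtheta)$. (ii) Since the belief is well-formed, the data is generated by a single true parameter $\vtheta^{*}$ with $p(t) = \ptheta(t \mid \vtheta^{*})$, so the true marginal over $\Theta$ concentrates its mass at $\vtheta^{*}$. (iii) Combining these, the integral collapses to $-\sum_t p(t) \log \ptheta(t \mid \vtheta^{*}) = -\sum_t p(t) \log p(t) = \ent(T)$. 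Plugging $\enttheta(T \mid \Theta, \vdn) = \ent(T)$ back in yields $\MItheta(T \rightarrow \Theta \mid \vdn) = \enttheta(T \mid \vdn) - \ent(T)$, and combining with the cross-entropy decomposition gives the theorem.

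I expect step (ii)---pinning down the true law of the parameter-valued random variable $\Theta$---to be the main obstacle. The Bayesian entropy is a cross-entropy whose \emph{outer} expectation is over the true distribution, yet $\Theta$ is a belief construct with no obvious ground-truth law; the argument only goes through because well-formedness supplies a genuine generating parameter $\vtheta^{*}$, letting me treat the true marginal of $\Theta$ as a point mass there rather than as the agent's non-degenerate posterior $\ptheta(\vtheta \mid \vdn)$. Being careful to use the true law for the outer expectation while using the belief for the $\log$ term---and verifying this is exactly the convention under which \cref{thm:symmetry} was proved---is the delicate bookkeeping I would need to get right.
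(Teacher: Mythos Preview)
Your proposal is correct and follows essentially the same route as the paper: both proofs reduce the claim to showing $\MItheta(T \rightarrow \Theta \mid \vdn) = \enttheta(T \mid \vdn) - \ent(T)$, invoke consistency (\cref{thm:symmetry}) to swap the direction of the Bayesian MI, and then use well-formedness to treat the true law of $\Theta$ as a point mass at the generating parameter so that $\enttheta(T \mid \Theta, \vdn)$ collapses to $\ent(T)$. The only cosmetic difference is ordering---the paper first collapses $\Theta$ to $\truetheta$ via the Dirac delta and then applies symmetry pointwise, whereas you apply symmetry to the full random variable first and collapse afterward---but the ingredients and the delicate step you flag are identical.
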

\begin{proof}
See \cref{app:proof_decomposition}.
\end{proof}
\noindent In other words, the cross-entropy is composed of the sum between the entropy itself---i.e. the ``true'' information the data source provides, or its inherent uncertainty---and how much information the data provides about its distribution itself.

\paragraph{Relation to SDL.}
The minimum description length \citep[MDL;][]{voita2020information} is a probing metric defined as $\enttheta(\calDn)$.
In its online coding interpretation, it is rewritten as \citep{rissanen1978modeling,blier2018description}: $\enttheta(\calDn) = \sum_{n=1}^N \enttheta(X \mid \calD_{n-1})$---where the cross-entropy of each element $X$ in $\calDn$ is computed incrementally because the parameter $\vtheta$ (which would make them independent) is unknown.
The surplus description length \citep[SDL;][]{whitney2020evaluating}
is defined as the difference between a dataset's cross-entropy and its entropy: $\enttheta(\calDn) - \ent(\calDn)$. 
Using \cref{thm:bayesian_decomposition}, we derive a new interpretation for SDL:
\begin{equation} \label{eq:info_dataset}
 \MItheta(\calDn \rightarrow \Theta) = \enttheta(\calDn) - \ent(\calDn)
\end{equation}
where we use prior predictive distributions, as opposed to posterior predictive ones.
From this equation, we find that SDL is the information a dataset gives a Bayesian agent about its model parameters.

While closely related to one another, the Bayesian MI, MDL and SDL converge to different values in the limit of infinite dataset sizes:
\begin{align}
    &\lim_{N \rightarrow \infty} \underbrace{\enttheta(\calDn)}_{\text{MDL}} \rightarrow \infty \\
    &\lim_{N \rightarrow \infty} \underbrace{\MItheta(\calDn \rightarrow \Theta)}_{\text{SDL}} \rightarrow \infty \\
    &\lim_{N \rightarrow \infty} \underbrace{\MItheta(Y \rightarrow X \mid \vdn)}_{\text{Bayesian MI}} \rightarrow \MI(X; Y)
\end{align}
It is easy to see that MDL goes to infinity as the dataset size grows---$\enttheta(\calDn)$ grows at least linearly with the data size.
The reasons behind SDL also exploding as the data increases are less straightforward, though, but become clear from its Bayesian MI interpretation.
If the parameter space is continuous, and if the Bayesian belief converges at the limit of infinite data (as per \cref{thm:convergence_mi}), the Bayesian mutual information in \cref{eq:info_dataset} will naturally go to infinity.\footnote{This is a byproduct of the properties of differential entropies (the entropy of continuous random variables). As the distribution $\ptheta(\vtheta \mid \vdn)$ converges to a Dirac delta distribution centred on the optimal parameters, which has an entropy of negative infinity, this Bayesian MI goes to positive infinite.%
\begin{align}
    \lim_{N \rightarrow \infty} \MItheta(\calDn \rightarrow \Theta) &= \lim_{N \rightarrow \infty} \left( \enttheta(\Theta) - \enttheta(\Theta \mid \vdn) \right) \nonumber \\
    &= \enttheta(\Theta) - (-\infty) \nonumber \\
    &= \infty
\end{align}
}
We thus argue that Bayesian mutual information is a better measure for probing than either MDL or SDL; 
although all are sensitive to the observed dataset size, Bayesian MI is the only that does not diverge as this size grows.

\section{Conclusion}

In this paper we proposed an information-theoretic framework to analyse mutual information from the perspective of a Bayesian agent; we term this Bayesian mutual information.
This framework has intuitive properties (at least from a machine learning perspective), which traditional information theory does not, for example: data can be informative, processing can help, and information can hurt.
In the experimental portion of our paper, we use Bayesian mutual information to probe representations for both part-of-speech tagging and dependency arc labelling.
We show that ALBERT is the most informative of the analysed representations in English;
and high dimensional representations can provide negative information on low data scenarios.

\section*{Acknowledgements}

We thank Adina Williams, Vincent Fortuin, Alex Immer, Lucas Torroba Hennigen and M\'{a}rio Alvim for providing feedback in various stages of this paper. We also thank the anonymous reviewers for their valuable feedback in improving this paper.

\section*{Ethical Considerations}

The authors foresee no ethical concerns with the research presented in this paper.

\bibliography{agents}
\bibliographystyle{acl_natbib}

\appendix
\onecolumn

\section{Ill-formed Beliefs Loose Information} \label{sec:ill_formed_decomposition}

For the sake of argument, we now assume an agent with an ill-defined belief $\ptheta(\vt \mid \vtheta)$ and a prior $\ptheta(\vtheta)$. We will show that such Bayesian agents loose information, meaning that they will not obtain as much information about their optimal parameters as if they had a well-formed belief.

\begin{thm} \label{thm:bayes_wrong_lose}
Assume $\vtheta^*$ are the optimal parameters for a Bayesian agent with ill-formed, but consistent beliefs. The information this agent will receive about its optimal parameters is 
\begin{equation}
    \MItheta(T \rightarrow \Theta = \vtheta^* \mid \vdn) < \enttheta(T \mid \vdn) - \ent(T)
\end{equation}
\end{thm}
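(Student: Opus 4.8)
The plan is to mirror the well-formed decomposition of \cref{thm:bayesian_decomposition}, but to pin the parameter random variable to the single optimal value $\vtheta^*$ rather than averaging over it; the strictness in the statement will then come entirely from the fact that an ill-formed belief cannot match the true distribution even at $\vtheta^*$. Concretely, I would first establish the exact identity
\begin{equation}
    \MItheta(T \rightarrow \Theta = \vtheta^* \mid \vdn) = \enttheta(T \mid \vdn) - \enttheta(T \mid \Theta = \vtheta^*)
\end{equation}
and then bound the subtracted term from below by $\ent(T)$.

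To derive the identity, I would expand the left-hand side through the definition of Bayesian MI specialised to the outcome $\Theta = \vtheta^*$, namely as the posterior surprisal $-\log \ptheta(\vtheta^* \mid \vdn)$ minus its conditional counterpart $-\sum_{\vt} p(\vt) \log \ptheta(\vtheta^* \mid \vt, \vdn)$. Since the agent is \emph{consistent}, Bayes' rule lets me flip the ratio $\ptheta(\vtheta^* \mid \vt, \vdn) / \ptheta(\vtheta^* \mid \vdn)$ into $\ptheta(\vt \mid \vtheta^*, \vdn) / \ptheta(\vt \mid \vdn)$, and the conditional independence $T \perp \calDn \mid \Theta$ built into \cref{dfn:bayesian_agent} collapses $\ptheta(\vt \mid \vtheta^*, \vdn)$ to $\ptheta(\vt \mid \vtheta^*)$. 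What remains is $\sum_{\vt} p(\vt) \log \ptheta(\vt \mid \vtheta^*) - \sum_{\vt} p(\vt) \log \ptheta(\vt \mid \vdn)$, which is precisely $\enttheta(T \mid \vdn) - \enttheta(T \mid \Theta = \vtheta^*)$. This is the same manipulation underlying \cref{thm:symmetry,thm:bayesian_decomposition}, only fixed at $\vtheta^*$ instead of marginalised, so in the well-formed case (where $\ptheta(\cdot \mid \vtheta^*) = p$) it recovers exactly that decomposition.

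The second step is the crux. Applying the cross-entropy decomposition of \cref{eq:cros_kl} to the fixed-parameter belief gives $\enttheta(T \mid \Theta = \vtheta^*) = \ent(T) + \KL\!\left(p \mid\mid \ptheta(\cdot \mid \vtheta^*)\right)$. By definition an ill-formed belief admits no $\vtheta$ with $\ptheta(\cdot \mid \vtheta) = p$, so in particular $\ptheta(\cdot \mid \vtheta^*) \neq p$; Gibbs' inequality (strict positivity of KL, with equality only for identical distributions) then forces $\KL\!\left(p \mid\mid \ptheta(\cdot \mid \vtheta^*)\right) > 0$ and hence $\enttheta(T \mid \Theta = \vtheta^*) > \ent(T)$. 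Substituting this strict lower bound into the identity yields $\MItheta(T \rightarrow \Theta = \vtheta^* \mid \vdn) < \enttheta(T \mid \vdn) - \ent(T)$, as claimed.

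I expect the main obstacle to be the bookkeeping in the first step: making the outcome-pinned version of Bayesian MI precise and verifying that consistency (Bayes' rule) together with the screening-off property $T \perp \calDn \mid \Theta$ genuinely licenses the rewrite, since the well-formed proof is cleaner precisely because there the correction term vanishes. By contrast, the strictness---the actual content of the theorem---is essentially free once the identity is in hand, reducing to the fact that cross-entropy strictly exceeds entropy under mismatched distributions, a mismatch that ill-formedness guarantees at $\vtheta^*$.
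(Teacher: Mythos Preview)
Your proposal is correct and follows essentially the same route as the paper: establish the identity $\MItheta(T \rightarrow \Theta = \vtheta^* \mid \vdn) = \enttheta(T \mid \vdn) - \enttheta(T \mid \Theta = \vtheta^*)$ via consistency, then use Gibbs' inequality on the ill-formed belief $\ptheta(\cdot \mid \vtheta^*) \neq p$ to force the strict inequality. The only cosmetic difference is that the paper invokes the symmetry \cref{thm:symmetry} to flip $T \rightarrow \Theta$ into $\Theta \rightarrow T$ and then expands, whereas you perform the underlying Bayes-rule manipulation inline; these are the same computation.
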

\begin{proof}
This proof follows from the Bayesian MI definition, from this Bayesian agent having consistent beliefs, 
and from the fact that the cross-entropy is an upper-bound to the entropy, with equality only when both probability distributions are the same---which by definition is not possible $\ptheta(\vt \mid \vtheta^*) \neq p(\vt)$
\begin{subequations}
\begin{align}
\MItheta(T \rightarrow \Theta = \vtheta^* \mid \vdn) &= 
\MItheta(\Theta  = \vtheta^* \rightarrow T \mid \vdn) \qquad \text{\color{gray} \small (symmetry due to belief consistency)}\\
    &= \enttheta(T \mid \vdn) - \enttheta(T \mid \Theta=\vtheta^*, \vdn) \\
    &= \enttheta(T \mid \vdn) - \enttheta(T \mid \Theta=\vtheta^*) \\
    &< \enttheta(T \mid \vdn) - \ent(T) \qquad \text{\color{gray} \small (strict inequality due to ill-formed beliefs)}
\end{align}
\end{subequations}
\end{proof}

\section{Measures of Information} \label{sec:measures_of_info}

Several other measures of information have been proposed, among them are the $H$ entropy \citep{degroot1962uncertainty}, the R\'{e}nyi entropy \citep{renyi1961measures,lenzi2000statistical}, Bayes vulnerability \citep{alvim2019science}, and the Determinantal Mutual Information \citep[DMI;][]{kong2020dominantly}. None of these take an agent's belief into consideration, and so our analysis is orthogonal to them.
The work most similar to ours, in this respect, is  \citeposs{clarkson2005belief} investigation of how belief impacts information leakage---and its extension, by \citet{hamadou2010reconciling}, to the R\'enyi min-entropy.
Importantly, the results obtained by \citeauthor{clarkson2005belief} 
can be similarly derived using our framework.

\section{A Note on Empirical Limitations} \label{sec:limitations}

Estimating the true MI between two random variables is known to be a hard problem for which several methods have been proposed \citep[for a detailed review, see ][]{mcallester2020formal}---estimating the Bayesian MI may be equally challenging. 
Given knowledge of $\ptheta(\cdot)$ and having access to samples from $p(\cdot)$, the Bayesian MI can be trivially estimated using the Bayesian surprisal's sample mean. 
On the other hand, in a setting such as active learning, where one (by definition) does not have access to the true distribution $p(y \mid x)$---only to the belief---the best approximation to the Bayesian MI may indeed be the belief-MI (used by \citealt{houlsby2011bayesian}) or the Bayesian surprise (used by \citealt{storck1995reinforcement} and \citealt{itti2006bayesian,itti2009bayesian}).
Finally, approximating the Bayesian MI in the cognitive sciences may be an even harder problem than estimating the true MI, since it would require approximating both the belief $\ptheta(\cdot)$ of a specific agent and the true distribution $p(\cdot)$ of an event.

\section{Proof of Symmetric Bayesian Mutual Information, \Cref{thm:symmetry}} \label{app:proof_symmetry}

\paragraph{\Cref{thm:symmetry}.}
\textit{An agent's Bayesian mutual information is symmetric, i.e.}
\begin{equation}
    \MItheta(X \rightarrow Y \mid \vdn) =  \MItheta(Y \rightarrow X \mid \vdn)
\end{equation}
\textit{for all distributions $p(x, y)$ if and only if the Bayesian agent is consistent.}
\begin{proof}
We will first prove that if the Bayesian MI is symmetric for all true distributions $p(x, y)$, then the Bayesian agent is consistent (the \emph{if} case).
We then prove the inverse proposition (the \emph{only if} case), completing this \textit{if and only if} theorem's proof.

\paragraph{$\Rightarrow$} 
We show this, by relying on specific distributions where $p(x, y)$ is deterministic, putting all probability mass in a single point, i.e. $p(x_0, y_0) = 1$.
\begin{subequations}
\begin{align}
    \MItheta(X \rightarrow Y) &= \MItheta(Y \rightarrow X) \\
    \sum_{x \in \calX}\sum_{y \in \calY} p(x, y) \log \frac{\ptheta(y \mid x)}{\ptheta(y)} 
    &= \sum_{x \in \calX}\sum_{y \in \calY} p(x, y) \log \frac{\ptheta(x \mid y)}{\ptheta(x)} \\
    p(x_0, y_0) \log \frac{\ptheta(y_0 \mid x_0)}{\ptheta(y_0)} 
    &= p(x_0, y_0) \log \frac{\ptheta(x_0 \mid y_0)}{\ptheta(x_0)} \\
    \frac{\ptheta(y_0 \mid x_0)}{\ptheta(y_0)} 
    &= \frac{\ptheta(x_0 \mid y_0)}{\ptheta(x_0)}
\end{align}
\end{subequations}
As we can show this same result for any value of $x \in \calX$ and $y \in \calY$, we conclude these agents must have consistent beliefs, i.e.
\begin{equation}
    \forall \vx \in \calX, \vy \in \calY:\qquad \frac{p(y \mid x)}{p(y)} = \frac{p(x \mid y)}{p(x)}
\end{equation}

\paragraph{$\Leftarrow$} 
The Bayes rule can be written as
\begin{equation}
    \frac{p(y \mid x)}{p(y)} = \frac{p(x \mid y)}{p(x)}
\end{equation}
We now show that all consistent agents  will have symmetric MI
\begin{subequations}
\begin{align}
    \MItheta(X \rightarrow Y) &= \sum_{x \in \calX}\sum_{y \in \calY} p(x, y) \log \frac{\ptheta(y \mid x)}{\ptheta(y)} \\ 
    &\stackrel{(1)}{=} \sum_{x \in \calX}\sum_{y \in \calY} p(x, y) \log \frac{\ptheta(x \mid y)}{\ptheta(x)} \\
    &= \MItheta(Y \rightarrow X)
\end{align}
\end{subequations}
where equality (1) relies on Bayes rule.

\end{proof}

\section{Proof of No Data Processing Inequality, \Cref{thm:data_processing_inequality}} \label{app:proof_data_inequality}

\paragraph{\Cref{thm:data_processing_inequality}.}
\textit{The data processing inequality does not hold for Bayesian information, i.e.}
\begin{equation}
    \MItheta(Y \rightarrow X \mid \vdn) ~~?~~ \MItheta(f(Y) \rightarrow X \mid \vdn)
\end{equation}

\begin{proof}
We prove this theorem with a counter example.
Let $Y$ be a Poisson distributed random variable with unknown mean, i.e. $p(y) = \poisson(\hat{\theta})$, where $\hat{\theta}$ is this distributions true mean. We further define a second random variable $Z$, as:
\begin{align}
    f(y) = y - \hat{\theta}, \qquad p(z \mid y) = \mathbbm{1}\{z = f(y)\}
\end{align}
$Z$ is, thus, a deterministic function of $Y$, where the function $f(y)$ mean-centres random variable $Y$.
Finally, we also define $X$ as a Bernoulli distributed random variable:
\begin{align}
    g(z) = \frac{1}{1+ e^{-z}}, 
    \qquad 
    p(x \mid z) = \left\{\begin{array}{cc}
        g(z) & x=1 \\
        1-g(z) & x=0
    \end{array} \right.
\end{align}
where $g(z)$ is a sigmoid function. We can further define the distribution
\begin{align}
    p(x \mid y) = \left\{\begin{array}{cc}
        \left(g \circ f\right)(y) & x=1 \\
        1- \left(g \circ f\right)(y) & x=0
    \end{array} \right.
\end{align}
From these distributions, we see that $\ent(Z \mid Y) = 0$, as $f(y)$ is a deterministic function. 
Further, in this specific example $\ent(X \mid Z) = \ent(X \mid Y)$---this can be proved from the fact that $f(y)$ is a bijective function, and that the data processing inequality is tight for bijections
\begin{align}
    \ent(X \mid Y) \le \ent(X \mid f(Y)) = \ent(X \mid Z) \le \ent(X \mid f^{-1}(Z)) = \ent(X \mid Y)
\end{align}
We now define a Bayesian agent, which correctly knows the relationship between $Y$, $Z$ and $X$, i.e. with well-formed beliefs $\ptheta(x \mid y, \theta)$ and $\ptheta(x \mid z, \theta)$, and with a prior $\ptheta(\theta)$---this agent does not know the true value of parameter $\theta$ though.
For this Bayesian agent
\begin{align}
    \enttheta(X \mid Y, \vdn) 
    = \sum_{x \in \calX}\sum_{y \in \calY} p(x, y) \log \int \ptheta(x \mid y, \theta) \ptheta(\theta \mid \vdn)  \mathrm{d}\theta 
\end{align}
When given $Z=f(Y)$, however, this agent does not need to know $\theta$, since the data is already mean-centred (there are no unknown parameters in $\ptheta(x \mid z)$).
This Bayesian agent's conditional entropy given $Z$ is
\begin{subequations}
\begin{align}
    \enttheta(X \mid Z, \vdn) &= \sum_{x \in \calX}\sum_{z \in \calZ} p(x, z) \log \ptheta(x \mid z, \vdn) \\
    & = \sum_{x \in \calX}\sum_{z \in \calZ} p(x, z) \log p(x \mid z, \vdn) \\
    & = \sum_{x \in \calX}\sum_{z \in \calZ} p(x, z) \log p(x \mid z) \\
    & = \ent(X \mid Z)
\end{align}
\end{subequations}
This concludes the example that a deterministic (mean-centring) function can help this Bayesian agent.
\begin{subequations}
\begin{align}
    \MItheta(Y \rightarrow X \mid \vdn) &= \enttheta(X \mid \vdn) - \enttheta(X \mid Y, \vdn) \\
    &\stackrel{(1)}{\le} \enttheta(X \mid \vdn) - \ent(X \mid Y) \\
    &= \enttheta(X \mid \vdn) - \ent(X \mid Z) \\
    &= \enttheta(X \mid \vdn) - \enttheta(X \mid Z, \vdn) \\
    &= \MItheta(Z  \rightarrow X \mid \vdn) \\
    &= \MItheta(f(Y)  \rightarrow X \mid \vdn)
\end{align}
\end{subequations}
where (1) becomes a strict inequality if the belief $\ptheta(\theta) \neq \delta(\theta - \hat{\theta})$, i.e. if the prior does not place all probability mass in the true parameters $\hat{\theta}$.
\end{proof}

\section{Proof of Bayesian MI is Upper-bounded by the True MI, \Cref{thm:lower_bound_mi}} \label{app:proof_agent_upperbound}

\paragraph{\Cref{thm:lower_bound_mi}.}
\textit{Assuming the agent's belief $\ptheta(\vx \mid \vdn)$ is tighter than the marginal of its beliefs over $\vy$, i.e. than $\sum_{\vy \in \calY} \ptheta(\vx \mid \vy, \vdn) p(\vy)$. We show}
\begin{equation}
    \MItheta(Y \rightarrow X \mid \vdn) \le \MI(X; Y)
\end{equation}
\begin{proof}
We start by noting that the difference between the true MI, and its Bayesian counterpart is equal to the difference between two KL-divergences
\looseness=-1
\begin{subequations}
\begin{align}
    \MI(X; Y) - \MItheta(Y \rightarrow X \mid \vdn) 
    &= \ent(X) - \ent(X \mid Y) - \enttheta(X \mid \vdn) + \enttheta(X \mid Y, \vdn) \\
    &=\KL(p(x \mid y) \mid\mid \ptheta(x \mid y, \vdn)) - \KL(p(x) \mid\mid \ptheta(x \mid \vdn)) 
\end{align}
\end{subequations}
To prove this theorem, we need to show that one KL-divergence is smaller than the other, i.e.
\begin{equation} \label{eq:diff_kl}
\KL(p(x) \mid\mid \ptheta(x \mid \vdn)) \le \KL(p(x \mid y) \mid\mid \ptheta(x \mid y, \vdn)) 
\end{equation}
We can show this with a bit of algebraic manipulation and an assumption about our Bayesian agent
\begin{subequations}
\begin{align}
    \KL(p(x \mid y) \mid\mid \ptheta(x \mid y, \vdn)) 
    &= \sum_{x \in \calX} p(x) \sum_{y \in \calY} p(y \mid x) \log \frac{p(x \mid y)}{\ptheta(x \mid y, \vdn)}  \\
    &= \sum_{x \in \calX} p(x) \sum_{y \in \calY} p(y \mid x) \log \frac{p(x \mid y) p(y)}{\ptheta(x \mid y, \vdn)p(y)} \\
    &\stackrel{(1)}{\ge} \sum_{x \in \calX} p(x) \left(\sum_{y \in \calY} p(y \mid x) \right) \log \frac{\sum_{y \in \calY} p(x \mid y) p(y)}{\sum_{y \in \calY} \ptheta(x \mid y, \vdn)p(y)} \\
    &= \sum_{x \in \calX} p(x) \cdot 1 \cdot \log \frac{p(x)}{\sum_{y \in \calY} \ptheta(x \mid y, \vdn)p(y)} \\
    &\stackrel{(2)}{\ge} \sum_{x \in \calX} p(x) \log \frac{p(x)}{\ptheta(x \mid \vdn)} \\
    &= \KL(p(x) \mid\mid \ptheta(x \mid \vdn))
\end{align}
\end{subequations}
In this equations, (1) relies on the log sum inequality, while (2) assumes the following inequality
\begin{subequations}
\begin{align}
    \enttheta(X) &= - \sum_{x \in \calX} p(x) \log \ptheta(x \mid \vdn) \\
    &\le - \sum_{x \in \calX} p(x) \log \sum_{y \in \calY} \ptheta(x \mid y, \vdn) p(y) 
\end{align}
\end{subequations}
This is equivalent to our assumption that this agent's estimate of $\ptheta(x \mid \vdn)$ is tighter than if the agent marginalised its beliefs over $y$.
While not necessarily true, in practice, if $X$ is discrete and has a small cardinality $|\calX|$, a simple Laplace smoothed estimate of $\ptheta(x \mid \vdn)$ is likely to result in this inequality.
One could instead assume an agent which uses a Monte Carlo sampling approximation for estimating $\ptheta(x \mid \vdn)$ from $\ptheta(x \mid y, \vdn)$. This would switch the inequality (2) for an approximation, and result in an expected lower bound instead
\begin{equation}
    \MItheta(Y \rightarrow X \mid \vdn) \lesssim \MI(X; Y)
\end{equation}
\end{proof}

\section{Proof of the Convergence to Mutual Information, \Cref{thm:convergence_mi}} \label{app:proof_convergence_mi}

\paragraph{\Cref{thm:convergence_mi}}
\textit{
If we assume a Bayesian agent's set of beliefs and prior are well-formed and meet the conditions
of Bernstein--von Mises Theorem
\citep[pg. 339,][]{bickel2015mathematical}.
Then,}
\begin{equation}
    \lim_{N \rightarrow \infty} \MItheta(Y \rightarrow X \mid \vdn) = \MI(X; Y)
\end{equation}
\begin{proof}
The Bernstein--von Mises Theorem only applies to well-formed beliefs, i.e. beliefs which can model the true probability distribution---a condition which is satisfied by our assumptions to this theorem.
By this theorem---and under a number of other specified conditions, e.g. absolute continuity of the prior in a neighbourhood around $\truetheta$ and continuous positive density at $\truetheta$ (see pg. 141 in \citealt{van2000asymptotic} for the full set of conditions)---we have
\begin{equation}
    p(x) = \lim_{N\rightarrow \infty} \ptheta(x \mid \vdn) 
\end{equation}
Now, we apply the continuous mapping theorem to analyse the convergence of the Bayesian entropy
\begin{subequations}
\begin{align}
    \lim_{N \rightarrow \infty} \enttheta(X \mid \vdn) &= - \lim_{N \rightarrow \infty} \sum_{\vx \in \calX} p(\vx) \log \ptheta(\vx \mid \vdn) \\
    &\overset{(1)}{=} - \sum_{\vx \in \calX} p(\vx) \log \left( \lim_{N \rightarrow \infty} \ptheta(\vx \mid \vdn)\right)\\
    &= - \sum_{\vx \in \calX} p(\vx) \log p(\vx) \\
    &= \ent(X)
\end{align}
\end{subequations}
where (1) relies on the continuous mapping theorem. A similar convergence applies to $\enttheta(X \mid Y, \vdn)$. Finally, we can complete the proof
\begin{subequations}
\begin{align}
    \lim_{N \rightarrow \infty} \MItheta(Y \rightarrow X \mid \vdn) 
    &= \lim_{N \rightarrow \infty} \left( \enttheta(X \mid \vdn) - \enttheta(X \mid Y, \vdn) \right) \\
    &= \ent(X) - \ent(X \mid Y) \\
    &= \MI(X; Y)
\end{align}
\end{subequations}

\end{proof}

\section{Proof of the Convergence to $\calV$-information, \Cref{thm:convergence_vmi}} \label{app:proof_convergence_vmi}

\paragraph{\Cref{thm:convergence_vmi}}
\textit{
Assume a Bayesian agent's beliefs and prior meet the conditions of \citet{kleijn2012bernstein}, who extend the Bernstein--von Mises Theorem to beliefs which are not well-formed.
Further, let $\calV = \{\ptheta(\cdot \mid \vtheta) \mid \ptheta(\vtheta) > 0 \}$. Then,}%
\begin{equation}
    \lim_{N \rightarrow \infty} \MItheta(Y \rightarrow X \mid \vdn) = \MIcalV(Y \rightarrow X)
\end{equation}
\begin{proof}
\citet{kleijn2012bernstein} extend the Bernstein--von Mises Theorem to ill-formed beliefs, showing that, under specific conditions for the Bayesian belief and priors, the predictive posterior distribution converges to
\begin{equation}
    \lim_{N\rightarrow \infty} \ptheta(x \mid \vdn) = \ptheta(x \mid \vtheta^{*})
\end{equation}
where $\vtheta^*$ is a unique set of parameters which minimises the KL-divergence between $\ptheta(\vx \mid \vtheta)$ and the true distribution $p(\vx)$, i.e.
\begin{equation}
    \ptheta(x \mid \vtheta^{*}) = \mathrm{arg}\inf_{q \in \calV} \sum_{x \in \mathcal{X}} p(x) \log \frac{1}{q(x)}
\end{equation}
Given this convergence property, we can finish the proof similarly to the one for the well-formed belief:
\begin{subequations}
\begin{align}
    \lim_{N \rightarrow \infty} \enttheta(X \mid \vdn) 
    &= \lim_{N \rightarrow \infty} \sum_{\vx \in \calX} p(\vx) \log \frac{1}{\ptheta(\vx \mid \vdn)} \\
    &\overset{(1)}{=} \sum_{\vx \in \calX} p(\vx) \log \left( \lim_{N \rightarrow \infty} \frac{1}{\ptheta(\vx \mid \vdn)} \right) \\
    &= \sum_{\vx \in \calX} p(\vx) \log \frac{1}{\ptheta(\vx \mid \vtheta^{*})} \\
    &= \ent_\calV(X)
\end{align}
\end{subequations}
where $\calV$ is defined as $\{p(x \mid \vtheta) \mid \ptheta(\vtheta) > 0\}$, and (1) relies on the continuous mapping theorem.
We now conclude this proof:
\begin{subequations}
\begin{align}
    \lim_{N \rightarrow \infty} \MItheta(Y \rightarrow X \mid \vdn) 
    &= \lim_{N \rightarrow \infty} \left( \enttheta(X \mid \vdn) - \enttheta(X \mid Y, \vdn) \right) \\
    &= \ent_\calV(X) - \ent_\calV(X \mid Y) \\
    &= \MIcalV(Y \rightarrow X)
\end{align}
\end{subequations}

\end{proof}

\section{Proof of the Intuitive Decomposition, \Cref{thm:bayesian_decomposition}} \label{app:proof_decomposition}

\paragraph{\Cref{thm:bayesian_decomposition}}
\textit{Let $\Theta$ be a parameter-valued random variable.
The entropy of a consistent Bayesian agent with well-formed beliefs decomposes as}
\begin{equation} \label{eq:cross_mi}
    \enttheta(T \mid \vdn) = \underbrace{\ent(T)}_{\text{entropy}} + \underbrace{\MItheta(T \rightarrow \Theta \mid \vdn)}_{\text{information about distribution}}
\end{equation}

\begin{proof}
Note that under the Bayesian MI only the information about the true model parameters, i.e. $\truetheta$, matters%
\begin{subequations}
\begin{align}
    \MItheta(X \rightarrow \Theta \mid \vdn) 
    &= \sum_{x \in \calX} \int\,p(x, \vtheta) \log \frac{\ptheta(\vtheta \mid x, \vdn)}{\ptheta(\vtheta \mid \vdn)} \mathrm{d}\vtheta \\
     &\stackrel{(1)}{=} \sum_{x \in \calX} \int\,p(x)\,\delta(\vtheta - \truetheta) \log \frac{\ptheta(\vtheta \mid x, \vdn)}{\ptheta(\vtheta \mid \vdn)} \mathrm{d}\vtheta \\
    &= \sum_{x \in \calX} \,p(x) \log \frac{\ptheta(\truetheta \mid x, \vdn)}{\ptheta(\truetheta \mid \vdn)} \\
    &= \MItheta(X \rightarrow \Theta = \truetheta \mid \vdn) 
\end{align}
\end{subequations}
where (1) relies on the fact that the true $p(\vtheta)$ places all probability mass on the value $\truetheta$.
Using this result, we can show the Bayesian mutual information in \cref{eq:cross_mi} is the same as the KL-divergence.
\begin{subequations}
\begin{align}
    \MItheta(X \rightarrow \Theta \mid \vdn) &= \MItheta(X \rightarrow \Theta=\truetheta \mid \vdn) \\
    &\stackrel{(2)}{=} \MItheta(\Theta = \truetheta \rightarrow X \mid \vdn) \\
    &= \enttheta(X \mid \vdn) - \enttheta(X \mid \Theta=\truetheta, \vdn) \\
    &= \enttheta(X \mid \vdn) - \ent(X) \\
    &= \KL(p(x) \mid\mid \ptheta(x \mid \vdn)) 
\end{align}
\end{subequations}
where (2) relies on the assumption that this agent's beliefs are consistent, and by definition $\ent(X)=\enttheta(X \mid \Theta=\truetheta, \vdn)$.
\end{proof}

\end{document}